\def\eqref#1{equation~\ref{#1}}
\def\1{\bm{1}}
\DeclareMathAlphabet{\mathsfit}{\encodingdefault}{\sfdefault}{m}{sl}
\SetMathAlphabet{\mathsfit}{bold}{\encodingdefault}{\sfdefault}{bx}{n}
\newcommand{\E}{\mathbb{E}}
\newcommand{\R}{\mathbb{R}}
\newtheorem{theorem}{Theorem}
\newtheorem{lemma}{Lemma}
\newtheorem{corollary}{Corollary}
\newtheorem{definition}{Definition}
\newcommand{\inner}[1]{\langle #1\rangle}
\title{Understanding Composition of Word Embeddings via Tensor Decomposition}
\author{Abraham Frandsen \& Rong Ge \\
Department of Computer Science\\
Duke University\\
Durham, NC 27708, USA \\
\texttt{\{abef,rongge\}@cs.duke.edu}
}
\author{Abraham Frandsen\thanks{Duke University. Email: abef@cs.duke.edu} \\ \and Rong Ge\thanks{Duke University. Email: rongge@cs.duke.edu}}
\begin{document}

\maketitle

\begin{abstract}
Word embedding is a powerful tool in natural language processing. In this paper we consider the problem of word embedding composition \--- given vector representations of two words, compute a vector for the entire phrase. We give a generative model that can capture specific syntactic relations between words. Under our model, we prove that the correlations between three words (measured by their PMI) form a tensor that has an approximate low rank Tucker decomposition. The result of the Tucker decomposition gives the word embeddings as well as a core tensor, which can be used to produce better compositions of the word embeddings. 
We also complement our theoretical results with experiments that verify our assumptions, and demonstrate the effectiveness of the new composition method.
\end{abstract}

\section{Introduction}

Word embeddings have become one of the most popular techniques in natural language processing. A word embedding maps each word in the vocabulary to a low dimensional vector. Several algorithms (e.g.,  \cite{mikolov2013distributed,pennington2014glove}) can produce word embedding vectors whose distances or inner-products capture semantic relationships between  words.  The vector representations are useful for solving many NLP tasks, such as analogy tasks (\cite{mikolov2013distributed}) or serving as features for supervised learning problems  (\cite{maas-EtAl:2011:ACL-HLT2011}).

While word embeddings are good at capturing the semantic information of a single word, a key challenge  is the problem of {\em composition}: how to combine the embeddings of two co-occurring, syntactically related words to an embedding of the entire phrase. In practice composition is often done by simply adding the embeddings of the two words, but this may not be appropriate when the combined meaning of the two words differ significantly from the meaning of individual words (e.g., ``complex number'' should not just be ``complex''+``number'').

In this paper, we try to learn a model for word embeddings that incorporates syntactic information and naturally leads to better compositions for syntactically related word pairs. Our model is motivated by the principled approach for understanding word embeddings initiated by \cite{arora2015rand}, and models for composition similar to \cite{coecke2010mathematical}.

\cite{arora2015rand} gave a generative model (RAND-WALK) for word embeddings, and showed several previous algorithms can be interpreted as finding the hidden parameters of this model. However, the RAND-WALK model does not treat syntactically related word-pairs differently from other word pairs. 
We give a generative model called syntactic RAND-WALK  (see Section~\ref{sec:model}) that is capable of capturing specific syntactic relations (e.g., adjective-noun or verb-object pairs). 
Taking adjective-noun pairs as an example, previous works (\cite{socher2012semantic, baroni2010nouns, maillard2015learning}) have tried to model the adjective as a linear operator (a matrix) that can act on the embedding of the noun. However, this would require learning a $d\times d$ matrix for each adjective while the normal embedding only has dimension $d$. 
In our model, we use a core tensor $T \in \R^{d\times d\times d}$ to capture the relations between a pair of words and its context. In particular, using the tensor $T$ and the word embedding for the adjective, it is possible to define a matrix for the adjective that can be used as an operator on the embedding of the noun. Therefore our model allows the same interpretations as many previous models while having much fewer parameters to train.

One salient feature of our model is that it makes good use of high order statistics. Standard word embeddings are based on the observation that the semantic information of a word can be captured by words that appear close to it. Hence most algorithms use pairwise co-occurrence between words to learn the embeddings. However, for the composition problem, the phrase of interest already has two words, so it would be natural to consider co-occurrences between at least three words (the two words in the phrase and their neighbors).

Based on the model, we can prove an elegant relationship between high order co-occurrences of words and the model parameters. In particular, we show that if we measure the Pointwise Mutual Information (PMI) between three words, and form an $n\times n\times n$ tensor that is indexed by three words $a,b,w$, then the tensor has a Tucker decomposition that exactly matches our core tensor $T$ and the word embeddings (see Section~\ref{sec:prelim}, Theorem~\ref{thm:main}, and Corollary~\ref{cor:pmi}). This suggests a natural way of learning our model using a tensor decomposition algorithm.

Our model also allows us to approach  the composition problem with more theoretical insights. Based on our model, if words $a$, $b$ have the particular syntactic relationships we are modeling, their composition will be a vector $v_a+v_b+T(v_a,v_b,\cdot)$. Here $v_a,v_b$ are the embeddings for word $a$ and $b$, and the tensor gives an additional correction term. By choosing different core tensors it is possible to recover many previous composition methods. We discuss this further in Section~\ref{sec:model}.

Finally, we train our new model on a large corpus and give experimental evaluations. In the experiments, we show that the model learned satisfies the new assumptions that we need. We also give both qualitative and quantitative results for the new embeddings. Our embeddings and the novel composition method can capture the specific meaning of adjective-noun phrases in a way that is impossible by simply ``adding'' the meaning of the individual words. Quantitative experiment also shows that our composition vector are better correlated with humans on a phrase similarity task.

\subsection{Related work}
\paragraph{Syntax and word embeddings}
Many well-known word embedding methods (e.g., \cite{pennington2014glove,mikolov2013distributed}) don't explicitly utilize or model syntactic structure within text.
\cite{andreas2014much} find that such syntax-blind word embeddings fail to capture syntactic information above and beyond what a statistical parser can obtain, suggesting that more work is required to build syntax into word embeddings. 

Several syntax-aware embedding algorithms have been proposed to address this. 
\cite{levy2014dependency} propose a syntax-oriented variant of the well-known skip-gram algorithm of \cite{mikolov2013distributed}, using contexts generated from syntactic dependency-based contexts obtained with a parser. 
\cite{cheng2015syntax} build syntax-awareness into a neural network model for word embeddings by indroducing a negative set of samples in which the order of the context words is shuffled, in hopes that the syntactic elements which are sensitive to word order will be captured.

\paragraph{Word embedding composition} 
Several works have addressed the problem of composition for word embeddings. On the theoretical side, 
\cite{gittens2017skip} give a theoretical justification for additive embedding composition in word models that satisfy certain assumptions, such as the skip-gram model, but these assumptions don't address syntax explicitly.
\cite{coecke2010mathematical} present a mathematical framework for reasoning about syntax-aware word embedding composition that motivated our syntactic RAND-WALK model. 
Our new contribution is a concrete and practical learning algorithm with theoretical guarantees.
\cite{mitchell2008vector,mitchell2010composition} explore various composition methods that involve both additive and multiplicative interactions between the component embeddings, but some of these are limited by the need to learn additional parameters post-hoc in a supervised fashion.

\cite{guevara2010regression} get around this drawback by first training word embeddings for each word and also for tokenized adjective-noun pairs. Then, the composition model is trained by using the constituent adjective and noun embeddings as input and the adjective-noun token embedding as the predictive target. 
\cite{maillard2015learning} treat adjectives as matrices and nouns as vectors, so that the composition of an adjective and noun  is just matrix-vector multiplication. The matrices and vectors are learned through an extension of the skip-gram model with negative sampling. 
In contrast to these approaches, our model gives rise to a syntax-aware composition function, which can be learned along with the word embeddings in an unsupervised fashion, and which generalizes many previous composition methods (see Section~\ref{sec:composition} for more discussion).

\paragraph{Tensor factorization for word embeddings} 
As \cite{levy2014neural} and \cite{li2015word} point out, some popular word embedding methods are closely connected matrix factorization problems involving pointwise mutual information (PMI) and word-word co-occurrences.
It is natural to consider generalizing this basic approach to tensor decomposition.
\cite{sharan2017orthogonalized} demonstrate this technique by performing a CP decomposition on triple word co-occurrence counts.
\cite{bailey2017word} explore this idea further by defining a third-order generalization of PMI, and then performing a symmetric CP decomposition on the resulting tensor. In contrast to these recent works, our approach arives naturally at the more general Tucker decomposition due to the syntactic structure in our model. Our model also suggests a different (yet still common) definition of third-order PMI.

\section{Preliminaries}
\label{sec:prelim}
\paragraph{Notation} For a vector $v$, we use $\|v\|$ to denote its Euclidean norm. For vectors $u,v$ we use $\inner{u,v}$ to denote their inner-product. For a matrix $M$, we use $\|M\|$ to denote its spectral norm, $\|M\|_F = \sqrt{\sum_{i,j} M_{i,j}^2}$ to denote its Frobenius norm, and $M_{i,:}$ to denote it's $i$-th row. In this paper, we will also often deal with 3rd order tensors, which are just three-way indexed arrays. We use $\otimes$ to denote the tensor product: if $u,v,w\in \R^d$ are $d$-dimensional vectors, $T = u\otimes v\otimes w$ is a $d\times d\times d$ tensor whose entries are $T_{i,j,k} = u_i v_j w_k$.

\paragraph{Tensor basics}
Just as matrices are often viewed as bilinear functions, third order tensors can be interpreted as trilinear functions over three vectors. Concretely, let $T$ be a $d\times d \times d$ tensor, and let $x, y, z \in \mathbb{R}^d$. 
We define the scalar $T(x,y,z) \in \mathbb{R}$ as follows
\[
T(x,y,z) = \sum_{i,j,k = 1}^dT_{i,j,k}x(i)y(j)z(k).
\]

This operation is linear in $x,y$ and $z$. Analogous to applying a matrix $M$ to a vector $v$ (with the result vector $Mv$), we can also apply a tensor $T$ to one or two vectors, resulting in  a matrix and a vector, respectively:
\begin{align*}
T(x,y,\cdot)(k) = \sum_{i,j=1}^d T_{i,j,k}x(i)y(j), \quad T(x,\cdot,\cdot)_{j,k} = \sum_{i=1}^d T_{i,j,k}x(i)
\end{align*}
We will make use of the simple facts that $\langle z, T(x,y,\cdot) \rangle = T(x,y,z)$ and $[T(x,\cdot, \cdot)]^\top y = T(x,y,\cdot)$. 

\paragraph{Tensor decompositions} Unlike matrices, there are several different definitions for the {\em rank} of a tensor. In this paper we mostly use the notion of {\em Tucker rank} (\cite{tucker1966some}). A tensor $T \in \R^{n\times n\times n}$ has Tucker rank $d$, if there exists a core tensor $S\in \R^{d\times d\times d}$ and matrices $A,B,C \in \R^{n\times d}$ such that
\[
T_{i,j,k} =  \sum_{i', j', k' = 1}^d S_{i',j',k'}A_{i,i'}B_{j,j'}C_{k,k'} = S(A_{i,:}, B_{j,:}, C_{k,:}),
\]
The equation above is also called a {\em Tucker decomposition} of the tensor $T$. The Tucker decomposition for a tensor can be computed efficiently.

When the core tensor $S$ is restricted to a diagonal tensor (only nonzero at entries $S_{i,i,i}$), the decomposition is called a CP decomposition (\cite{carroll1970analysis,harshman1970foundations}) which can also be written as $T = \sum_{i=1}^d S_{i,i,i} A_{i,:}\otimes B_{i,:}\otimes C_{i,:}.$
In this case, the tensor $T$ is the sum of $d$ rank-1 tensors ($A_{i,:}\otimes B_{i,:}\otimes C_{i,:}$). However, unlike matrix factorizations and the Tucker decomposition, the CP decomposition of a tensor is hard to compute in the general case (\cite{haastad1990tensor,hillar2013most}). Later in Section~\ref{sec:learning} we will also see why our model for syntactic word embeddings naturally leads to a Tucker decomposition.
%

\section{Syntactic RAND-WALK model}
\label{sec:model}
In this section, we introduce our syntactic RAND-WALK model and present formulas for inference in the model. We also derive a novel composition technique that emerges from the model. 

\paragraph{RAND-WALK model}
We first briefly review the RAND-WALK model (\cite{arora2015rand}). 
In this model, a corpus of text is considered as a sequence of random variables $w_1, w_2, w_3, \ldots$, where $w_t$ takes values in a vocabulary $V$ of $n$ words. 
Each word $w\in V$ has a word embedding $v_w\in \R^d$. The prior for the word embeddings is $v_w = s\cdot \hat{v}$, where $s$ is a positive bounded scalar random variable with constant expectation $\tau$ and upper bound $\kappa$, and $\hat{v}\sim N(0,I)$.

The distribution of each $w_t$ is determined in part by a random walk $\{c_t \in \mathbb{R}^d\,|\,t=1,2,3\ldots\}$, where $c_t$ -- called a \emph{discourse vector} -- represents the topic of the text at position $t$. This random walk is slow-moving  in the sense that $\|c_{t+1}-c_t\|$ is small, but mixes quickly to a stationary distribution that is uniform on the unit sphere, which we denote by $\mathcal{C}$.


Let $\mathscr{C}$ denote the sequence of discourse vectors, and let $\mathscr{V}$ denote the set of word embeddings. Given these latent variables, the model specifies the following conditional probability distribution:
\begin{equation}
\text{Pr}[w_t = w\,|c_t
]  \propto \exp(\langle v_w, c_t\rangle).
\label{eq:randwalk}
\end{equation}
The graphical model depiction of RAND-WALK is shown in Figure \ref{fig:randwalk_graphical}.

\begin{figure}
\centering
\begin{subfigure}{.5\textwidth}
\centering
\includegraphics[height=4.5cm]{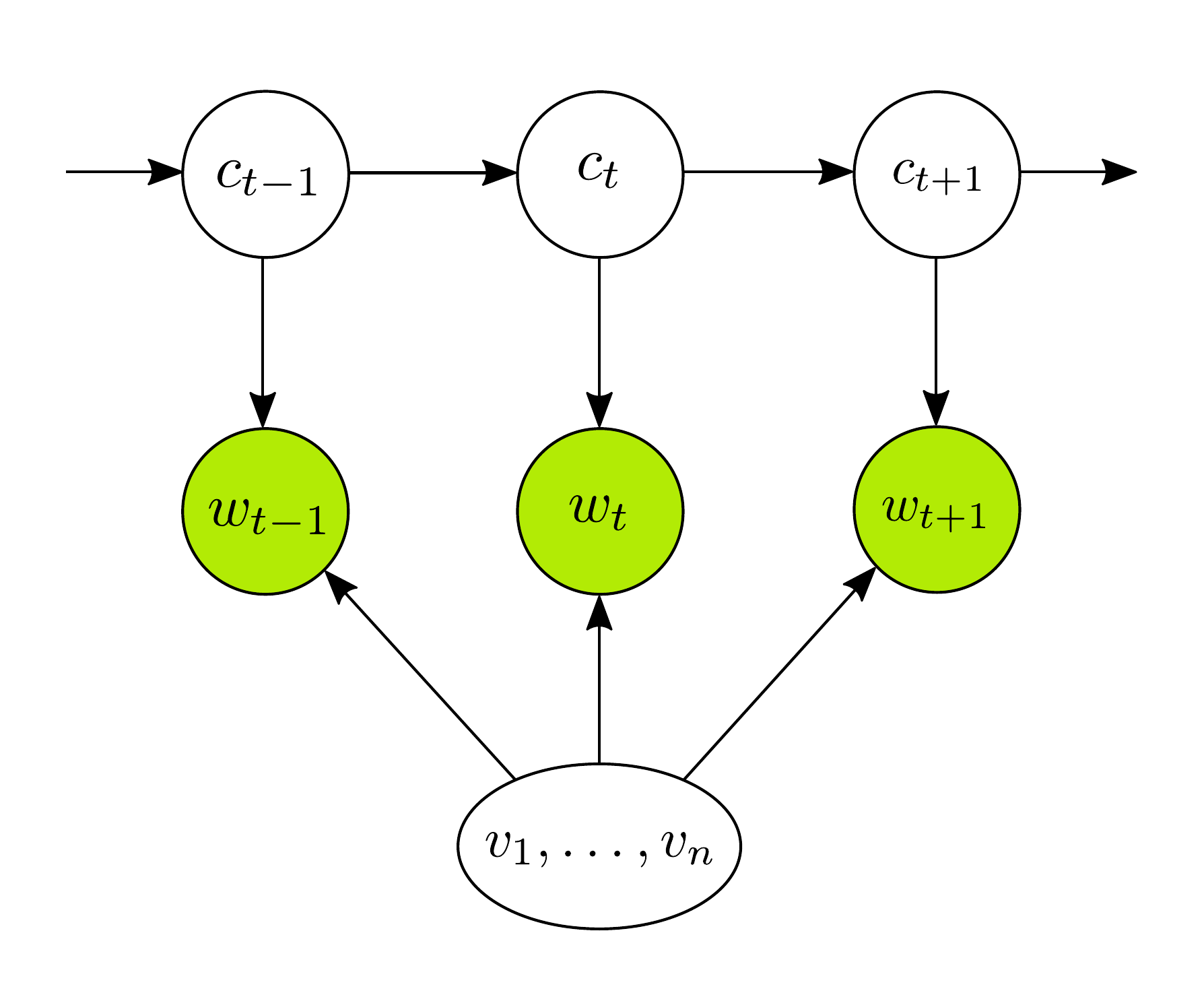}
\caption{RAND-WALK}
\label{fig:randwalk_graphical}
\end{subfigure}%
\begin{subfigure}{.5\textwidth}
\centering
\includegraphics[height=4cm]{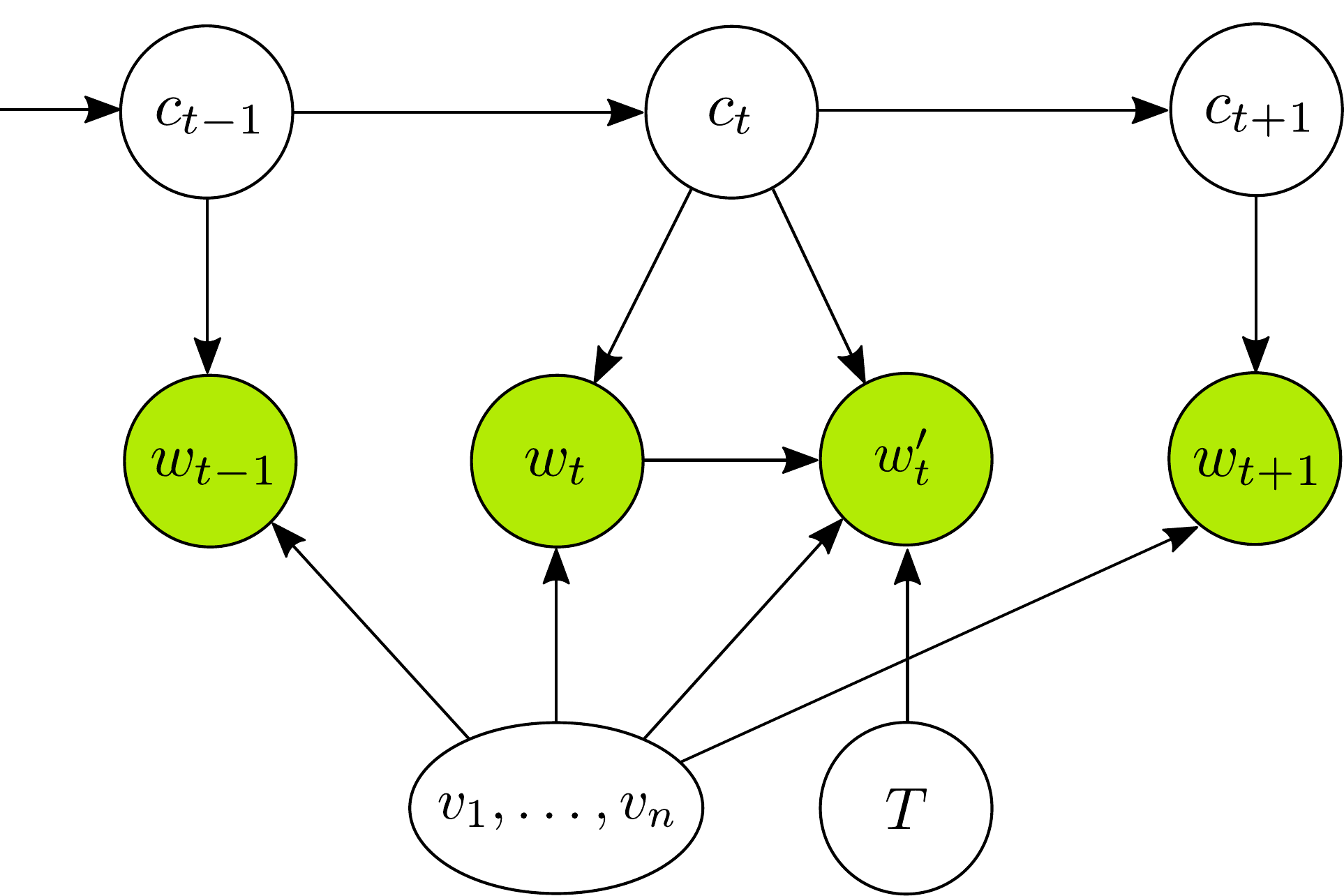}
\caption{Syntactic RAND-WALK}
\label{fig:syntactic_graphical}
\end{subfigure}
\caption{Graphical models of RAND-WALK (left) and our new model (right), depicting a syntactic word pair $(w_t,w'_t)$. Green nodes correspond to observed variables, white nodes to latent variables.}
\label{fig:graphical}
\end{figure}
\subsection{Syntactic RAND-WALK}
One limitation of RAND-WALK is that it can't deal with syntactic relationships between words. Observe that conditioned on $c_t$ and $\mathscr{V}$, $w_t$ is independent of the other words in the text.  However, in natural language, words can exhibit more complex dependencies, e.g. adjective-noun pairs, subject-verb-object triples, and other syntactic or grammatical structures. 

In our syntactic RAND-WALK model, we start to address this issue by introducing 
direct pairwise word dependencies in the model. When there is a direct dependence between two words, we call the two words a \emph{syntactic word pair}. 
In RAND-WALK, the interaction between a word embedding $v$ and a discourse vector $c$ is mediated by their inner product $\inner{v,c}$.
When modeling a syntactic word pair, we need to mediate the interaction between \emph{three} quantities, namely a discourse vector $c$ and the word embeddings $v$ and $v'$ of the two relevant words. A natural generalization is to use a trilinear form defined by a tensor $T$, i.e. 
\[
T(v,v',c) = \sum_{i,j,k = 1}^d T_{i,j,k}v(i)v'(j)c(k).
\]
Here, $T \in \mathbb{R}^{d\times d\times d}$ is also a latent random variable, which we call the \emph{composition tensor}.

We model a syntactic word pair as a single semantic unit within the text (e.g. in the case of adjective-noun phrases). We realize this choice by allowing each discourse vector $c_t$ to generate a pair of words $w_t, w'_t$ with some small probability $p_{syn}$. 
To generate a syntactic word pair $w_t, w'_t$, we first generate a {\em root word} $w_t$ conditioned on $c_t$ with probability proportional to $\exp(\inner{c_t,w_t})$, and then we draw $w'_t$ from a conditional distribution defined as follows:
\begin{equation}
\text{Pr}[w'_t=b\,|\,w_t=a, \mathscr{C},\mathscr{V}] \propto \exp(\inner{c_t,v_b}+T(v_a,v_b,c_t)).
\label{eq:tensor_randwalk}
\end{equation}
Here $\exp(\inner{c_t,v_b})$ would be proportional to the probability of generating word $b$ in the original RAND-WALK model, without considering the syntactic relationship. The additional term $T(v_a,v_b,c_t)$ can be viewed as an adjustment based on the syntactic relationship.

We call this extended model Syntactic RAND-WALK.
Figure \ref{fig:syntactic_graphical} gives the graphical model depiction for a syntactic word pair, and we summarize the model below.
\begin{definition}[Syntactic RAND-WALK model] 
\label{def:syn}
The model consists of the following:
\begin{enumerate}
\item Each word $w$ in vocabulary has a corresponding embedding $v_w\sim s\cdot \hat{v}_w$, where $s\in \R_{\geq 0}$ is bounded by $\kappa$ and $\E[s] = \tau$; $\hat{v}_w\sim N(0,I_{d\times d})$.
\item The sequence of discourse vectors $c_1,...,c_t$ are generated by a random walk on the unit sphere, $\|c_t - c_{t+1}\| \le \epsilon_w/\sqrt{d}$ and the stationary distribution is uniform.
\item For each $c_t$, with probability $1-p_{syn}$, it generates one word $w_t$ with probability proportional to $\exp(\inner{c_t,v_{w_t}})$.
\item For each $c_t$, with probability $p_{syn}$, it generates a syntactic pair $w_t,w_t'$ with probability proportional to $\exp(\inner{c_t,v_{w_t}})$ and $\exp(\inner{c_t,v_{w'_t}}+T(v_{w_t},v_{w'_t}, c_t))$ respectively, where $T$ is a $d\times d \times d$ composition tensor.
\end{enumerate}
\end{definition}

\subsection{Inference in the model}
We now calculate the marginal probabilities of observing pairs and triples of words under the syntactic RAND-WALK model. We will show that these marginal probabilities are closely related to the model parameters (word embeddings and the composition tensor). All proofs in this section are deferred to supplementary material.

Throughout this section, we consider two adjacent context vectors $c_t$ and $c_{t+1}$, and condition on the event that $c_t$ generated a single word and $c_{t+1}$ generated a syntactic pair\footnote{As we will see in Section~\ref{sec:experiments}, in practice it is easy to identify which words form a syntactic pair, so it is possible to condition on this event in training.}. 
The main bottleneck in computing the marginal probabilities is that the conditional probailities specified in equations (\ref{eq:randwalk}) and (\ref{eq:tensor_randwalk}) are not normalized. Indeed, for these equations to be exact, we would need to divide by the appropriate partition functions, namely $Z_{c_t}:= \sum_{w\in V} \exp(\langle v_w,c_t\rangle)$ for the former and $Z_{c_t,a} := \sum_{w\in V}\exp(\inner{c_t,v_w}+T(v_a,v_w,c_t))$ for the latter. 
Fortunately, we show that under mild assumptions these quantities are highly concentrated. To do that we need to control the norm of the composition tensor.

\begin{definition}\label{def:bounded} The composition tensor $T$ is $(K,\epsilon)$-bounded, if for any word embedding $v_a,v_b$, we have
\[
\|T(v_a,\cdot,\cdot)+I\|^2  \le \frac{Kd\epsilon^2}{\log^2 n};\quad\|T(v_a,\cdot,\cdot)+I\|_F^2 \le Kd;\quad\|T(v_a,v_b,\cdot)\|^2 \le Kd.
\]
\end{definition}

To make sure $\exp(\inner{c_t,v_w}+T(v_a,v_w,c_t))$ are within reasonable ranges, the value $K$ in this definition should be interpreted as an absolute constant (like 5, similar to previous constants $\kappa$ and $\tau$). Intuitively these conditions make sure that the effect of the tensor cannot be too large, while still making sure the tensor component $T(v_a,v_b,c)$ can be comparable (or even larger than) $\inner{v_b,c}$. We have not tried to optimize the $\log$ factors in the constraint for $\|T(v_a,\cdot,\cdot)+I\|^2$. 

Note that if the tensor component $T(v_a,\cdot,\cdot)$ has constant singular values (hence comparable to $I$), we know these conditions will be satisfied with $K = O(1)$ and $\epsilon = O(\frac{\log n}{\sqrt{d}})$.  Later in Section~\ref{sec:experiments} we verify that the tensors we learned indeed satisfy this condition. Now we are ready to state the concentration of partition functions:

\begin{lemma}[Concentration of partition functions]
For the syntactic RAND-WALK model, there exists a constant $Z$ such that
\[
\underset{c\sim\mathcal{C}}{\textnormal{Pr}}[(1-\epsilon_z)Z \leq Z_c \leq (1+\epsilon_z)Z] \geq 1-\delta,
\]
for $\epsilon_z = \tilde O(1/\sqrt{n})$ and $\delta = \exp(-\Omega(\log^2n))$.

Furthermore, if the tensor $T$ is $(K,\epsilon)$-bounded, then for any fixed word $a \in V$, there exists a constant $Z_a$ such that
\[
\underset{c\sim\mathcal{C}}{\textnormal{Pr}}[(1-\epsilon_{z,a})Z_a\leq Z_{c,a} \leq (1+\epsilon_{z,a})Z_a ] \geq 1-\delta,
\] 
for $\epsilon_{z,a} = O(\epsilon)+\tilde{O}(1/\sqrt{n})$ and $\delta = \exp(-\Omega(\log^2 n))$.
\label{lem:concentration}
\end{lemma}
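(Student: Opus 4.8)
The plan is to condition on the discourse vector and reduce each of the two bounds to concentration of a sum of $n$ i.i.d.\ nonnegative random variables, adapting the partition-function argument of \cite{arora2015rand}. By the spherical symmetry of the prior $\hat v_w\sim N(0,I)$, I would fix an arbitrary unit vector $c$ and prove concentration over the word embeddings $\{v_w\}$; since the resulting failure bound is uniform in $c$, it holds in particular for $c\sim\mathcal C$. Writing $Y_w := \inner{v_w,c} = s_w\inner{\hat v_w,c}$, each $Y_w$ is sub-Gaussian with variance proxy $\le\kappa^2$ (as $\inner{\hat v_w,c}\sim N(0,1)$ and $0\le s_w\le\kappa$), and $Z_c=\sum_{w=1}^n e^{Y_w}$. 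The constant in the statement is $Z := n\,\E[e^{Y}] = n\,\E_s[e^{s^2/2}]$, which is finite and independent of $c$.

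The one genuine obstacle is that $e^{Y_w}$ is only log-normal-type: it has no moment generating function, so a direct Chernoff bound on $Z_c$ fails, and a crude Bernstein bound with a worst-case per-term bound is far too weak to reach $\delta=\exp(-\Omega(\log^2 n))$. The fix is a carefully calibrated truncation: set $\tilde Y_w := e^{Y_w}\1[Y_w\le\tfrac14\log n]$, so each $\tilde Y_w\le n^{1/4}$. The sub-Gaussian tail gives $\Pr[\exists w:\ Y_w>\tfrac14\log n]\le n\,e^{-\Omega(\log^2 n)} = e^{-\Omega(\log^2 n)}$, the same estimate shows $|n\,\E[\tilde Y_w]-Z| = e^{-\Omega(\log^2 n)}$, and $\sum_w\Var(\tilde Y_w)\le n\,\E[e^{2Y}] = O(n)$. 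Bernstein's inequality applied to $\sum_w(\tilde Y_w-\E\tilde Y_w)$ at deviation $t=\tfrac12\epsilon_z Z=\Theta(\epsilon_z n)$ yields failure probability $\exp(-\Omega(\epsilon_z^2 n/(1+n^{1/4}\epsilon_z)))$, and choosing $\epsilon_z=\Theta(\log n/\sqrt n)=\tilde O(1/\sqrt n)$ makes the denominator $1+o(1)$ and the exponent $\Omega(\log^2 n)$. Union bounding with the truncation event proves the first claim.

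For the second bound I would fix the word $a$ (so $v_a$ and $T$ are fixed) and a unit vector $c$. Using $\inner{z,T(v_a,v_w,\cdot)}=T(v_a,v_w,c)$ and $[T(v_a,\cdot,\cdot)]^\top v_w=T(v_a,v_w,\cdot)$, one has $\inner{c,v_w}+T(v_a,v_w,c)=\inner{N_a^\top c,\,v_w}$ with $N_a:=I+[T(v_a,\cdot,\cdot)]^\top$, so $Z_{c,a}=\sum_w e^{\inner{N_a^\top c,\,v_w}}$. By Definition~\ref{def:bounded}, $\|N_a\|^2 = \|T(v_a,\cdot,\cdot)+I\|^2\le Kd\epsilon^2/\log^2 n = O(1)$ in the intended regime $\epsilon=O(\log n/\sqrt d)$, and $\|N_a\|_F^2\le Kd$. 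Hence $\|N_a^\top c\|\le\|N_a\| = O(1)$, each term is $e^{s_w\|N_a^\top c\|\,g_w}$ with $g_w\sim N(0,1)$ and $s_w\|N_a^\top c\| = O(1)$, so the truncation-plus-Bernstein argument above applies verbatim to give, with probability $1-e^{-\Omega(\log^2 n)}$,
\[
Z_{c,a}=(1\pm\tilde O(1/\sqrt n))\,n\,\phi(\|N_a^\top c\|),\qquad \phi(r):=\E_s[e^{s^2r^2/2}].
\]

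The last step is to strip the $c$-dependence from $\phi(\|N_a^\top c\|)$. Since $c\sim\mathcal C$ is uniform on the sphere, $\|N_a^\top c\|^2=c^\top N_aN_a^\top c$ is a quadratic form with mean $\rho^2:=\|N_a\|_F^2/d\le K$; writing $c=g/\|g\|$ with $g\sim N(0,I_d)$ and using $\|N_aN_a^\top\|_F^2\le\|N_a\|^2\|N_a\|_F^2=O(d)$ and $\|N_aN_a^\top\| = O(1)$, the Hanson--Wright inequality (together with $\chi^2$ concentration of $\|g\|^2$) gives $\|N_a^\top c\|^2\in\rho^2(1\pm O(\log n/\sqrt d))$, i.e.\ $\|N_a^\top c\|=\rho(1\pm O(\epsilon))$, with probability $1-e^{-\Omega(\log^2 n)}$. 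Since $\phi$ has bounded derivative on $[0,O(1)]$ and $\phi\ge 1$, this gives $\phi(\|N_a^\top c\|)=(1\pm O(\epsilon))\phi(\rho)$; setting $Z_a:=n\,\phi(\rho)$ and union bounding the two failure events yields $Z_{c,a}\in(1\pm\epsilon_{z,a})Z_a$ with $\epsilon_{z,a}=O(\epsilon)+\tilde O(1/\sqrt n)$. I expect the truncation calibration of the second paragraph to be the delicate point; everything else is routine once Definition~\ref{def:bounded} is in hand.
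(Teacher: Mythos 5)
Your proposal follows essentially the same route as the paper: both reduce $Z_{c,a}$ to $\sum_w\exp(\inner{N_a^\top c,\,v_w})$ with $N_a=I+[T(v_a,\cdot,\cdot)]^\top$, concentrate $\|N_a^\top c\|^2$ over $c$ via a quadratic-form/$\chi^2$ tail bound (the paper's SVD-plus-Laurent--Massart computation in Lemma~\ref{lem:tensor_concentration} is your Hanson--Wright step in different clothing), pass through $\E_v[\exp(\inner{r,v})]=\E_s[\exp(s^2\|r\|^2/2)]$ with the same Lipschitz estimate on $g(x)=\E_s[e^{s^2x/2}]$, and finish with concentration of the empirical sum over the $n$ word vectors --- which the paper imports from \cite{arora2015rand} and you re-derive by truncation at $\tfrac14\log n$ plus Bernstein, correctly calibrated to reach $\delta=\exp(-\Omega(\log^2 n))$. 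The one thing you gloss over is that a bound ``uniform in $c$'' over the randomness of the embeddings does not by itself yield the stated probability over $c\sim\mathcal C$ for a fixed good draw of the embeddings; the paper closes exactly this gap with a short Fubini/Markov argument at the end of its proof of Lemma~\ref{lem:concentration}, and your argument needs the same one-line patch.
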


Using this lemma, we can obtain simple expressions for co-occurrence probabilities. 
In particular, for any fixed $w,a,b \in V$, we adopt the following notation:
\begin{align*}
p(a):= \text{Pr}[w_{t+1} = a] \quad &
p(w,a):= \text{Pr}[w_t=w,w_{t+1} = a]\\
p([a,b]):= \text{Pr}[w_{t+1}=a,w'_{t+1} = b] 
\quad&
p(w,[a,b]):= \text{Pr}[w_t = w,w_{t+1}=a,w'_{t+1} = b].
\end{align*} 

Here in particular we use $[a,b]$ to highlight the fact that $a$ and $b$ form a syntactic pair. Note $p(w,a)$ is the same as the co-occurrence probability of words $w$ and $a$ if both of them are the only word generated by the discourse vector. Later we will also use $p(w,b)$ to denote $\text{Pr}[w_t=w,w_{t+1} = b]$ (not $\text{Pr}[w_t=w,w'_{t+1} = b]$).

We also require two additional properties of the word embeddings, namely that they are norm-bounded above by some constant times $\sqrt{d}$, and that all partition functions are bounded below by a positive constant. Both of these properties hold with high probability over the word embeddings provided $n \gg d\log d$ and $d \gg \log n$, as shown in the following lemma:

\begin{lemma}\label{lem:lower_partition}
Assume that the composition tensor $T$ is $(K,\epsilon)$-bounded, where $K$ is a constant. With probability at least $1-\delta_1-\delta_2$ over the word vectors, where $\delta_1 = \exp(\Theta(d\log d) - \Theta(n))$ and $\delta_2 = \exp(\Theta(\log n) - \Theta(d))$, there exist positive absolute constants $\gamma$ and $\beta$ such that $\|v_i\|\leq \kappa\gamma$ for each $i \in V$ and $Z_c \geq \beta$ and $Z_{c,a} \geq \beta$ for any unit vector $c \in \mathbb{R}^d$ and any word $a \in V$.
\end{lemma}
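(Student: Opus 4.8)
The plan is to handle the three claimed estimates separately. The uniform norm bound is a union bound over the vocabulary: since $v_i=s_i\hat v_i$ with $0\le s_i\le\kappa$ and $\hat v_i\sim N(0,I_d)$, we have $\|v_i\|^2\le\kappa^2\|\hat v_i\|^2$ and $\|\hat v_i\|^2\sim\chi^2_d$, so a standard $\chi^2$ tail bound gives $\Pr[\|\hat v_i\|>\gamma\sqrt d]\le\exp(-\Omega(d))$ for a suitable absolute constant $\gamma$, and summing over the $n$ words yields $\|v_i\|\le\kappa\gamma\sqrt d$ for all $i\in V$ except with probability $n\exp(-\Omega(d))=\exp(\Theta(\log n)-\Theta(d))$; this is (up to constants in the exponents) the term $\delta_2$, and it is the event forcing $d\gg\log n$. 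Condition on it from now on.

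For the two partition-function lower bounds I would use convexity of $\exp$, which --- pleasantly --- makes the bounds uniform over \emph{all} unit $c$ at once, with no covering-net argument. Fix a unit vector $c$ and a word $a$, set $M_a:=T(v_a,\cdot,\cdot)\in\R^{d\times d}$ (the matrix obtained by contracting $T$ with $v_a$ in its first coordinate), and note that $\inner{v_w,c}+T(v_a,v_w,c)=\inner{v_w,(I+M_a)c}$. Jensen's inequality applied to the $n$ summands gives
\[
Z_{c,a}=\sum_{w\in V}\exp\big(\inner{v_w,(I+M_a)c}\big)\ge n\exp\big(\inner{\bar v,(I+M_a)c}\big)\ge n\exp\big(-\|\bar v\|\cdot\|(I+M_a)c\|\big),
\]
with $\bar v:=\tfrac1n\sum_{w\in V}v_w$, and the choice $M_a=0$ gives $Z_c\ge n\exp(-\|\bar v\|)$. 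The first condition of Definition~\ref{def:bounded} bounds $\|(I+M_a)c\|\le\|T(v_a,\cdot,\cdot)+I\|\le\sqrt{Kd}\,\epsilon/\log n$, which the assumptions keep at $O(1)$ (the regime of interest being $\epsilon=O(\log n/\sqrt d)$). So everything reduces to showing $\|\bar v\|$ is $O(1)$ with high probability; but conditioned on $(s_w)$ the sum $\sum_w s_w\hat v_w$ is $N\big(0,(\sum_w s_w^2)I_d\big)$, so $\|\bar v\|^2=\tfrac1{n^2}(\sum_w s_w^2)\|g\|^2$ with $g\sim N(0,I_d)$, and using $\sum_w s_w^2\le\kappa^2 n$ and a $\chi^2$ tail bound, $\|\bar v\|^2\le 3\kappa^2 d/n\le 1$ (the last step from $n\gg d$) except with probability $\exp(-\Omega(d))$. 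Plugging back in, $Z_c\ge n/e$ and $Z_{c,a}\ge n\,e^{-O(1)}$ hold simultaneously for every unit $c$ and every $a\in V$, each above an absolute constant $\beta$ once $n$ is a large enough constant. The total failure probability is then $\exp(-\Omega(d))$ plus the norm-bound term, which is at most $\delta_1+\delta_2$.

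The only real difficulty would be uniformity over the continuum of unit vectors $c$ (and over all root words $a$), and the Jensen step handles it with no net whatsoever --- the tensor contribution being absorbed entirely by the operator-norm bound of Definition~\ref{def:bounded}, and the single random quantity $\bar v$ concentrating by the Gaussian cancellation above. A more hands-on alternative would be to show $Z_{c'}\ge n/4$ for each point $c'$ of a covering net of the unit sphere by a Chernoff bound over the $n$ independent summands $\exp(\inner{v_w,c'})$ (each $\ge 1$ with probability $\ge\tfrac12$), and then transfer to arbitrary $c$; there the obstacle is the Lipschitz constant of $c\mapsto Z_c$, of order $\|v_w\|\exp(\|v_w\|)$ per word, which under $\|v_i\|\le\kappa\gamma\sqrt d$ is large and forces a fine net --- its cardinality being what the $\exp(\Theta(d\log d))$ factor of $\delta_1$ reflects, and the $\chi^2$ control of the word norms being exactly what one needs to bound that Lipschitz constant. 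The convexity route proposed here sidesteps the net entirely and so only pays $\delta_2$ together with the negligible $\exp(-\Omega(d))$ deviation of $\bar v$.
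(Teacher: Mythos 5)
Your Jensen route is genuinely different from the paper's proof, and it is clean where it applies, but it has a gap at the step where you bound the tensor contribution: you control $\|(I+T(v_a,\cdot,\cdot))c\|$ by the \emph{operator-norm} clause of Definition~\ref{def:bounded}, which gives only $\|T(v_a,\cdot,\cdot)+I\|\le\sqrt{Kd}\,\epsilon/\log n$, and you then assert this is $O(1)$. That does not follow from the hypotheses of the lemma: the lemma assumes only that $T$ is $(K,\epsilon)$-bounded with $K$ constant, and the standing assumption in the appendix is $d\ge\log^2 n/\epsilon^2$, i.e.\ $\sqrt{d}\,\epsilon/\log n\ge 1$ with no upper bound --- for $\epsilon$ near $1$ the operator norm can be as large as $\sqrt{Kd}/\log n$. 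In that case your final exponent $\|\bar v\|\cdot\|I+T(v_a,\cdot,\cdot)\|$ is of order $\epsilon d/(\sqrt{n}\log n)$, which exceeds $\log n$ when, say, $n=d\log^2 d$ (a value permitted by the requirement $n\gg d\log d$), so $n\exp(-\cdot)\to 0$ and no absolute constant $\beta$ survives. Trying to repair this with the third clause $\|T(v_a,v_b,\cdot)\|^2\le Kd$ via $T(v_a,\bar v,c)\ge-\sqrt{Kd}$ also fails, since $\log n\ll\sqrt d$ under the same standing assumption. So your argument proves the lemma only under the extra hypothesis $\|I+T(v_a,\cdot,\cdot)\|=O(1)$ (equivalently $\epsilon=O(\log n/\sqrt d)$), which is the empirically relevant regime but is not what the lemma states.

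The paper's proof is essentially your ``hands-on alternative'': an $\epsilon$-net of the sphere of cardinality $\exp(\Theta(d\log d))$ (whence $\delta_1$), the observation that for each net point some word $v_j$ (resp.\ $v_k+T(v_a,v_k,\cdot)$) has nonnegative inner product with it except with probability $2^{-n}$, and a transfer to arbitrary $c$ using only $\|v_k\|\le\kappa\gamma\sqrt d$ and $\|T(v_a,v_k,\cdot)\|\le\sqrt{Kd}$ --- a single summand then lower-bounds the whole sum by a constant. Crucially it uses only the third clause of Definition~\ref{def:bounded}, so it is insensitive to $\epsilon$. Where your extra hypothesis does hold, your argument is strictly stronger: it gives $Z_c,Z_{c,a}\ge n\,e^{-O(1)}$ rather than a constant, needs no net (hence no $\delta_1$ term and no requirement $n\gg d\log d$, only $n\gtrsim d$ for the concentration of $\bar v$), and is uniform over $a$ for free. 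It would be worth recording it as a separate, stronger statement under the explicit assumption $\|I+T(v_a,\cdot,\cdot)\|=O(1)$ rather than as a proof of the lemma as written.
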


We can now state the main result.
\begin{theorem} \label{thm:main}
Suppose that the events referred to in Lemma \ref{lem:concentration} hold. Then 
\begin{align}
\label{eq:uni_prob}
\log p(a) &= \frac{\|v_a\|^2}{2d} - \log Z \pm \epsilon_p\\
\log p(w,a) &= \frac{\|v_w+v_a\|^2}{2d} - 2\log Z \pm \epsilon_p\\
\log p([a,b]) &= \frac{\|v_a+v_b+T(v_a,v_b,\cdot)\|^2}{2d} - \log Z - \log Z_a \pm \epsilon_p\\
\log p(w,[a,b]) &= \frac{\|v_w + v_a + v_b + T(v_a,v_b,\cdot)\|^2}{2d} - 2\log Z - \log Z_a \pm \epsilon_p
\end{align}
Here $\epsilon_p = O(\epsilon+\epsilon_w)+\tilde{O}(1/\sqrt{n}+1/d),$ where $\epsilon$ is from the $(K,\epsilon)$-boundedness of $T$ and $\epsilon_w$ is from Definition \ref{def:syn}.
\end{theorem}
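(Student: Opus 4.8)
The goal is to compute four marginal log-probabilities under the syntactic RAND-WALK model. The unifying strategy is: (i) write each marginal probability as an expectation over the discourse vector(s) $c_t$ (and $c_{t+1}$) of a product of the (normalized) conditional probabilities from \eqref{eq:randwalk} and \eqref{eq:tensor_randwalk}; (ii) replace each partition function $Z_{c}$ or $Z_{c,a}$ by its concentrated value $Z$ or $Z_a$ from Lemma~\ref{lem:concentration}, incurring only a multiplicative $1\pm\epsilon_z$ (resp. $1\pm\epsilon_{z,a}$) error; (iii) use the slow-mixing of the random walk (Definition~\ref{def:syn}, item 2) to replace $c_{t+1}$ by $c_t$ up to an error controlled by $\epsilon_w$, so that everything is an expectation over a single $c\sim\mathcal C$; (iv) evaluate the resulting Gaussian-type integral $\E_{c\sim\mathcal C}[\exp(\inner{u,c})]$ in closed form using the fact that $c$ is uniform on the unit sphere and the standard estimate $\E_{c\sim\mathcal C}[\exp(\inner{u,c})] = \exp(\|u\|^2/2d)(1\pm o(1))$ valid when $\|u\| = O(\sqrt d)$; (v) take logs and collect all error terms into $\epsilon_p$. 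Lemma~\ref{lem:lower_partition} is what guarantees the norm bounds $\|v_i\|\le\kappa\gamma\sqrt d$ (so the vectors $u$ appearing as exponents have norm $O(\sqrt d)$ even after adding $T(v_a,v_b,\cdot)$, using $(K,\epsilon)$-boundedness) and that the partition functions are bounded below, so that dividing by them is safe.

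\textbf{The four cases.} For $p(a)$: condition on $c_{t+1}$ generating a single word, so $p(a) = \E_{c}[\exp(\inner{v_a,c})/Z_c]$; substituting $Z_c\approx Z$ and integrating gives $\exp(\|v_a\|^2/2d)/Z$, and taking logs yields the first line. For $p(w,a)$: now $c_t$ generates $w$ and $c_{t+1}$ generates $a$, each as single words, so the probability is $\E_{c_t,c_{t+1}}[\exp(\inner{v_w,c_t})\exp(\inner{v_a,c_{t+1}})/(Z_{c_t}Z_{c_{t+1}})]$; after the $Z\to Z$ substitutions and the $c_{t+1}\to c_t$ replacement, the exponent becomes $\inner{v_w+v_a,c}$, integrating to $\exp(\|v_w+v_a\|^2/2d)/Z^2$. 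For $p([a,b])$: $c_{t+1}$ generates the syntactic pair, so by item~4 the joint conditional is $\exp(\inner{c,v_a})/Z_c \cdot \exp(\inner{c,v_b}+T(v_a,v_b,c))/Z_{c,a}$; the key algebraic observation is that $T(v_a,v_b,c) = \inner{T(v_a,v_b,\cdot),c}$, so the exponent is $\inner{v_a+v_b+T(v_a,v_b,\cdot),\,c}$, a \emph{linear} functional of $c$, and the integral again produces $\exp(\|v_a+v_b+T(v_a,v_b,\cdot)\|^2/2d)$, divided by $Z\cdot Z_a$. The fourth case layers $c_t$ generating $w$ on top of the third, combining to exponent $\inner{v_w+v_a+v_b+T(v_a,v_b,\cdot),c}$ after the slow-mixing replacement, with normalization $Z^2 Z_a$.

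\textbf{Error bookkeeping.} Each partition-function substitution costs a $\log(1\pm\epsilon_z)=O(\epsilon_z)=\tilde O(1/\sqrt n)$ or $\log(1\pm\epsilon_{z,a})=O(\epsilon)+\tilde O(1/\sqrt n)$ additive error after taking logs; the slow-mixing replacement $c_{t+1}\to c_t$ costs $O(\epsilon_w)$ since $\|c_{t+1}-c_t\|\le\epsilon_w/\sqrt d$ and the relevant exponents are $O(\sqrt d)$-Lipschitz in $c$; and the Gaussian integral approximation $\E_{c\sim\mathcal C}[\exp(\inner{u,c})]=\exp(\|u\|^2/2d)(1+\tilde O(1/d))$ for $\|u\|=O(\sqrt d)$ costs $\tilde O(1/d)$. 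Summing gives $\epsilon_p = O(\epsilon+\epsilon_w)+\tilde O(1/\sqrt n + 1/d)$, matching the statement. I would present the computation for $p(w,[a,b])$ in full detail since it subsumes all the others, then remark that the other three follow by dropping the appropriate factors.

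\textbf{Main obstacle.} The routine parts are the partition-function substitutions (handed to us by Lemma~\ref{lem:concentration}) and the algebraic collapse of $T(v_a,v_b,c)$ into a linear form. The genuinely delicate step is the spherical-integral estimate $\E_{c\sim\mathcal C}[\exp(\inner{u,c})] = \exp(\|u\|^2/2d)(1\pm\tilde O(1/d))$ and, relatedly, justifying that the slow-mixing replacement of $c_{t+1}$ by $c_t$ inside these exponential integrals only perturbs the answer by $O(\epsilon_w)$ rather than something that blows up with $\|u\|$. Both require care because the exponents have norm growing like $\sqrt d$, so one is integrating $\exp$ of something of typical size $O(1)$ but with heavy tails; controlling the contribution of the tail region (where $\inner{u,c}$ is atypically large) against the concentration of $c$ near the equator is where the $\tilde O(1/d)$ and the boundedness hypotheses on $T$ and on $\|v_i\|$ (from Lemma~\ref{lem:lower_partition}) are really used. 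I expect this is handled by a lemma — likely imported from the original RAND-WALK analysis of \cite{arora2015rand} and adapted to absorb the extra $T(v_a,v_b,\cdot)$ term — and the bulk of the proof in the supplement is verifying that adaptation goes through under $(K,\epsilon)$-boundedness.
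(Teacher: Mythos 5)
Your plan coincides with the paper's proof: the paper writes $p([a,b])$ and $p(w,[a,b])$ as expectations over the discourse vectors, splits on the high-probability event of Lemma~\ref{lem:concentration} (bounding the complement via a tail lemma imported from \cite{arora2015rand}, exactly the delicate point you flag), substitutes $Z$ and $Z_a$, uses the slow-mixing bound to replace $c_{t+1}$ by $c_t$ at cost $1\pm\epsilon_w$, and evaluates $\E_c[\exp(\inner{u,c})]=\exp(\|u\|^2/2d)(1\pm\tilde O(1/d))$ via Lemma A.5 of \cite{arora2015rand}, with the same error bookkeeping; the first two identities are simply cited from that earlier work. Your key observation that $T(v_a,v_b,c)=\inner{T(v_a,v_b,\cdot),c}$ collapses the exponent to a linear form is precisely what the paper exploits, so the proposal is correct and essentially identical in approach.
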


\subsection{Composition}
\label{sec:composition}
Our model suggests that the latent discourse vectors contain the meaning of the text at each location. 
It is therefore reasonable to view the discourse vector $c$ corresponding to a syntactic word pair $(a,b)$ as a suitable representation for the phrase as a whole. The posterior distribution of $c$ given $(a,b)$ satisfies
\[
\text{Pr}[c_t = c\,|\,w_t=a,w_t'=b] \propto \frac{1}{Z_cZ_{c,a}}\exp\left(\inner{v_a+v_b+T(v_a,v_b,\cdot),c}\right)\text{Pr}[c_t=c].
\]
Since $\text{Pr}[c_t=c]$ is constant, and since $Z_c$ and $Z_{c,a}$ concentrate on values that don't depend on $c$, the MAP estimate of $c$ given $[a,b]$, which we denote by $\hat{c}$, satisfies
\[
\hat{c} \approx \underset{\|c\|=1}{\arg\max}\exp\left(\inner{v_a+v_b+T(v_a,v_b,\cdot),c}\right) = \frac{v_a+v_b+T(v_a,v_b,\cdot)}{\|v_a+v_b+T(v_a,v_b,\cdot)\|}.
\]
Hence, we arrive at our basic tensor composition: for a syntactic word pair $(a,b)$, the composite embedding for the phrase is $v_a+v_b+T(v_a,v_b,\cdot)$.

Note that our composition involves the traditional additive composition $v_a+v_b$, plus a correction term $T(v_a,v_b,\cdot)$.
We can view $T(v_a,v_b,\cdot)$ as a matrix-vector multiplication $[T(v_a,\cdot,\cdot)]^\top v_b$, i.e. the composition tensor allows us to compactly associate a matrix with each word in the same vein as \cite{maillard2015learning}. Depending on the actual value of $T$, the term $T(v_a,v_b,\cdot)$ can also recover any manner of linear or multiplicative interactions between $v_a$ and $v_b$, such as those proposed in \cite{mitchell2010composition}.

\section{Learning}
\label{sec:learning}
In this section we discuss how to learn the parameters of the syntactic RAND-WALK model.
Theorem \ref{thm:main} provides key insights into the learning problem, since it relates joint probabilities between words (which can be estimated via co-occurrence counts) to the word embeddings and composition tensor. By examining these equations, we can derive a particularly simple formula that captures these relationships. To state this equation,
we define the PMI for 3 words as
\begin{equation}
PMI3(a,b,w) := \log \frac{p(w,[a,b])p(a)p(b)p(w)}{p(w,a)p(w,b)p([a,b])}.
\end{equation}
We note that this is just one possible generalization of pointwise mutual information (PMI) to several random variables, but in the context of our model, it is a very natural definition as all the partition numbers will be canceled out. Indeed, as an immediate corollary of Theorem~\ref{thm:main}, we have

\begin{corollary}\label{cor:pmi}
Suppose that the events referred to in Lemma \ref{lem:concentration} hold. Then  for $\epsilon_p$ same as Theorem~\ref{thm:main}
\begin{equation}
PMI3(a,b,w) = \frac{1}{d}T(v_a,v_b,v_w) \pm O(\epsilon_p). \label{eq:pmi3}
\end{equation}
\end{corollary}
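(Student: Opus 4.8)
The plan is to derive Corollary~\ref{cor:pmi} as a direct algebraic consequence of the four expansions in Theorem~\ref{thm:main}. First I would write out $PMI3(a,b,w)$ using its definition, substituting for each of the six log-probabilities the corresponding expression from Theorem~\ref{thm:main}: $\log p(w,[a,b])$, $\log p(a)$, $\log p(b)$, $\log p(w)$ in the numerator, and $\log p(w,a)$, $\log p(w,b)$, $\log p([a,b])$ in the denominator. Each of these holds up to an additive error of $\pm\epsilon_p$, so after combining seven such terms the total error is $O(\epsilon_p)$ by the triangle inequality, matching the claimed bound.

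The next step is to check that all the partition-function terms cancel. Counting occurrences of $\log Z$: the numerator contributes $-\log Z$ from $\log p(w,[a,b])$ and $-\log Z$ from each of $\log p(a),\log p(b),\log p(w)$, for a total of $-4\log Z$; the denominator contributes $-2\log Z$ from each of $\log p(w,a),\log p(w,b)$ and $-\log Z$ from $\log p([a,b])$, i.e. $-5\log Z$, but it enters with a minus sign, giving $+5\log Z$. Wait — I should be careful with signs. Actually $\log p(w,[a,b])$ carries $-2\log Z - \log Z_a$, not $-\log Z$, so let me just say: I would tabulate the coefficient of $\log Z$ and of $\log Z_a$ across all seven terms and verify both coefficients vanish. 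This is exactly the point flagged in the paper's remark that $PMI3$ is the ``natural'' definition precisely because the partition numbers cancel, so the cancellation is guaranteed by design; it just needs to be exhibited.

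The remaining task is the quadratic-form bookkeeping. After cancellation, what survives is $\frac{1}{2d}$ times a signed sum of squared norms: $\|v_w+v_a+v_b+T(v_a,v_b,\cdot)\|^2 + \|v_a\|^2 + \|v_b\|^2 + \|v_w\|^2 - \|v_w+v_a\|^2 - \|v_w+v_b\|^2 - \|v_a+v_b+T(v_a,v_b,\cdot)\|^2$. Expanding every square via $\|x+y+\cdots\|^2 = \sum\|x_i\|^2 + 2\sum_{i<j}\inner{x_i,x_j}$ and writing $u := T(v_a,v_b,\cdot)$ for brevity, all the pure $\|v_a\|^2,\|v_b\|^2,\|v_w\|^2,\|u\|^2$ terms cancel, and among the cross terms everything cancels except $2\inner{v_w,u}$, which comes only from the first square (the square $\|v_a+v_b+u\|^2$ that would cancel it has no $v_w$). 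Hence the sum equals $2\inner{v_w,u} = 2\,T(v_a,v_b,v_w)$ using the identity $\inner{z,T(x,y,\cdot)}=T(x,y,z)$ noted in the preliminaries, and dividing by $2d$ yields $\frac{1}{d}T(v_a,v_b,v_w)$. I do not anticipate a genuine obstacle here; the only thing requiring care — and the step most prone to a sign slip — is the sign/coefficient accounting in the telescoping of the quadratic forms and of the partition terms, so I would carry out that expansion explicitly and methodically rather than by inspection.
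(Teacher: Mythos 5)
Your proposal is correct and is essentially identical to the paper's own proof: substitute the four expansions of Theorem~\ref{thm:main} into the definition of $PMI3$, observe that the coefficients of $\log Z$ and $\log Z_a$ each sum to zero, and expand the signed sum of squared norms so that only the cross term $2\inner{v_w, T(v_a,v_b,\cdot)} = 2T(v_a,v_b,v_w)$ survives, giving $\frac{1}{d}T(v_a,v_b,v_w)$ up to an $O(\epsilon_p)$ error from the seven approximations. Your sign bookkeeping (including the self-correction that $\log p(w,[a,b])$ carries $-2\log Z - \log Z_a$) checks out against the paper's computation.
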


That is, if we consider $PMI3(a,b,w)$ as a $n\times n\times n$ tensor, Equation \eqref{eq:pmi3} is exactly a Tucker decomposition of this tensor of Tucker rank $d$. Therefore, all the parameters of the syntactic RAND-WALK model can be obtained by finding the Tucker decomposition of the PMI3 tensor. This equation also provides a theoretical motivation for using third-order pointwise mutual information in learning word embeddings. 

\subsection{Implementation}
We now discuss concrete details about our implementation of the learning algorithm.\footnote{code for preprocessing, training, and experiments can be found at \url{https://github.com/abefrandsen/syntactic-rand-walk}}

\paragraph{Corpus.} We train our model using a February 2018 dump of the English Wikipedia. 
The text is pre-processed to remove non-textual elements, stopwords, and rare words (words that appear less than 1000 within the corpus), resulting in a vocabulary of size  68,279. We generate a matrix of word-word co-occurrence counts using a window size of 5. To generate the tensors of adjective-noun-word and verb-object-word co-occurrence counts, we first run the Stanford Dependency Parser (\cite{chen2014fast}) on the corpus in order to identify all adjective-noun and verb-object word pairs, and then use context windows that don't cross sentence boundaries to populate the triple co-occurrence counts. 

\paragraph{Training.} We first train the word embeddings according to the RAND-WALK model, following \cite{arora2015rand}. 
Using the learned word embeddings, we next train the composition tensor $T$ via the following optimization problem
\[
\underset{T,\{C_w\},C}{\min} \sum_{(a,b),w}f(X_{(a,b),w})\left(\log(X_{(a,b),w})-\|v_w+v_a+v_b + T(v_a,v_b,\cdot)\|^2-C_{a}-C\right)^2,
\]
where $X_{(a,b),w}$ denotes the number of co-occurrences of word $w$ with the syntactic word pair $(a,b)$ ($a$ denotes the noun/object) and $f(x) = \min(x,100)$. 
This objective function isn't precisely targeting the Tucker decomposition of the PMI3 tensor, but it is analogous to the training criterion used in \cite{arora2015rand}, and can be viewed as a negative log-likelihood for the model.
To reduce the number of parameters, we constrain $T$ to have CP rank 1000.  We also trained the embeddings and tensor jointly, but found that this approach yields very similar results. In all cases, we utilize the Tensorflow framework (\cite{abadi2016tensorflow}) with the Adam optimizer (\cite{kingma2014adam}) (using default parameters), and train for 1-5 epochs.

\section{Experimental verification}
\label{sec:experiments}
In this section, we verify and evaluate our model empirically on select qualitative and quantitative tasks.
In all of our experiments, we focus solely on syntactic word pairs formed by adjective-noun phrases, where 
the noun is considered the root word.  

\subsection{Model verification}
\cite{arora2015rand} empirically verify the model assumptions of RAND-WALK, and since we trained our embeddings in the same way, we don't repeat their verifications here. Instead, we verify two key properties of syntactic RAND-WALK.

\paragraph{Norm of composition tensor}
We check the assumptions that the tensor $T$ is $(K,\epsilon)$-bounded. 
Ranging over all adjective-noun pairs in the corpus, we find that $\frac{1}{d}\|T(v_a,\cdot,\cdot)+I\|^2$ has mean 0.052 and maximum 0.248, $\frac{1}{d}\|T(v_a,\cdot,\cdot)+I\|_F^2$ has mean 1.61 and maximum 3.23, and $\frac{1}{d}\|T(v_a,v_b,\cdot)\|^2$ has  mean 0.016 and maximum 0.25. Each of these three quantities has a well-bounded mean, but $\|T(v_a,\cdot,\cdot)+I\|^2$ has some larger outliers. If we ignore the log factors (which are likely due to artifacts in the proof) in Definition~\ref{def:bounded}, the tensor is $(K,\epsilon)$ bounded for $K = 4$ and $\epsilon = 0.25$. 

\paragraph{Concentration of partition functions} In addition to Definition~\ref{def:bounded}, we also directly check its implications: our model predicts that the partition functions $Z_{c,a}$ concentrate around their means. To check this, given a noun $a$, we draw 1000 random vectors $c$ from the unit sphere, and plot the histogram of $Z_{c,a}$.
 Results for a few randomly selected words $a$ are given in Figure~\ref{fig:partition}. All partition functions that we inspected exhibited good concentration.

\begin{figure}
  \centering
  \includegraphics[width=\textwidth]{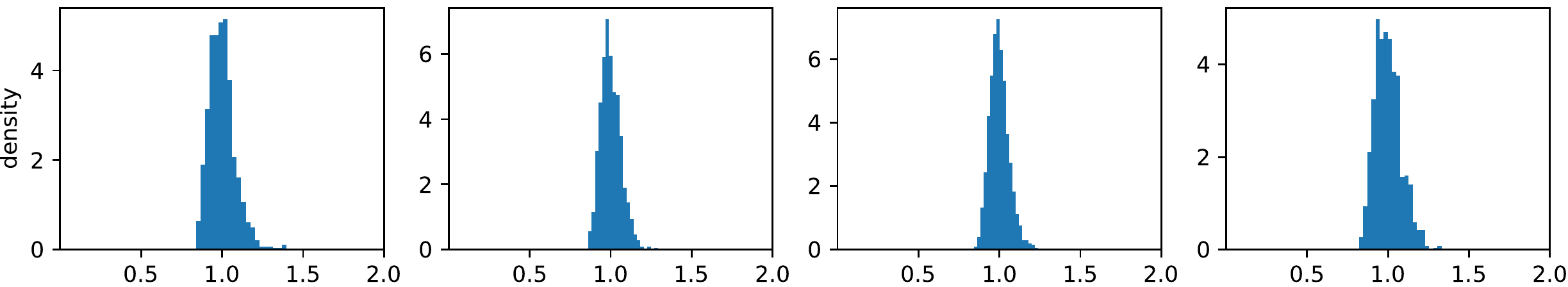}
  \caption{Histograms of partition functions $Z_{c,a}$ ($x$-axis is $Z_{c,a}/\E[Z_{c,a}]$)}
  \label{fig:partition}
\end{figure}

\subsection{Qualitative analysis of composition}
\begin{table}
\caption{Top 10 words relating to various adjective-noun phrases}
\label{tbl:words}
\centering
\begin{tabular}{llllll}
     \multicolumn{2}{c}{civil war} & \multicolumn{2}{c}{complex numbers} & \multicolumn{2}{c}{national park} \\
\cmidrule(r){1-2}\cmidrule(r){3-4}\cmidrule(r){5-6}
      additive &        tensor &   additive &         tensor &      additive &       tensor \\
\hline \\
           war &        civil &         complex &       complex &      national &    yosemite \\
         civil &     somalian &         numbers &   eigenvalues &          park &      denali \\
      military &       eicher &          number &       numbers &         parks &      gunung \\
          army &      crimean &        function &     hermitian &    recreation &       kenai \\
      conflict &      laotian &       complexes &   quaternions &        forest &         nps \\
          wars &    francoist &       functions &    marginalia &      historic &       teton \\
        fought &      ulysses &        integers &         azadi &      heritage &     refuges \\
 revolutionary &     liberian &  multiplication &     rationals &      wildlife &      tilden \\
        forces &  confederate &       algebraic &   holomorphic &      memorial &   snowdonia \\
      outbreak &        midst &         integer &  rhythmically &         south &       jigme \\
\end{tabular}
\end{table}
We test the performance of our new composition for adjective-noun and verb-object pairs by looking for the words with closest embedding to the composed vector. 
For a phrase $(a,b)$, we compute $c = v_a+v_b+T(v_a,v_b,\cdot)$, and then
retrieve the words $w$ whose embeddings $v_w$ have the largest cosine similarity to $c$. We compare our results to the additive composition method.
Tables \ref{tbl:words} and \ref{tbl:words_vo} show results for three adjective-noun and verb-object phrases. In each case, the tensor composition is able to retrieve some words that are more specifically related to the phrase. However, the tensor composition also sometimes retrieves words that seem unrelated to either word in the phrase. We conjecture that this might be due to the sparseness of co-occurrence of three words. We also observed cases where the tensor composition method was about on par with or inferior to the additive composition method for retrieving relevant words, particularly in the case of low-frequency phrases. More results can be found in supplementary material.

\begin{table}
\caption{Top 10 words relating to various verb-object phrases}
\label{tbl:words_vo}
\centering
\begin{tabular}{llllll}
     \multicolumn{2}{c}{took place} & \multicolumn{2}{c}{took part} & \multicolumn{2}{c}{took lead} \\
\cmidrule(r){1-2}\cmidrule(r){3-4}\cmidrule(r){5-6}
      additive &        tensor &   additive &         tensor &      additive &       tensor \\
\hline \\
place &    occurred &      part &   participated &      took &   equalised \\
      took &   scheduled &      took &  participating &      lead &    halftime \\
     death &   commenced &    taking &    participate &    taking &     nailing \\
      take &       event &      take &     culminated &      take &     kenseth \\
    taking &      events &     taken &      organised &      went &     fumbled \\
     birth &  culminated &     takes &  participation &       led &   touchdown \\
     taken &    thursday &    became &    hostilities &     taken &    furlongs \\
     takes &      friday &      came &    culminating &      came &     trailed \\
      came &   postponed &       put &       invasion &       put &  keselowski \\
      held &      lasted &     whole &      undertook &    wanted &     peloton \\
\end{tabular}
\end{table}

\subsection{Phrase Similarity}
We also test our tensor composition method on a adjective-noun phrase similarity task using the dataset introduced by \cite{mitchell2010composition}. The data consists of 108 pairs each of adjective-noun and verb-object phrases that have been given similarity ratings by a group of 54 humans. The task is to use the word embeddings to produce similarity scores that correlate well with the human scores; we use both the Spearman rank correlation and the Pearson correlation as evaluation metrics for this task. We note that the human similarity judgments are somewhat noisy; intersubject agreement for the task is $0.52$ as reported in \cite{mitchell2010composition}. 

Given a phrase $(a,b)$ with embeddings $v_a, v_b$, respectively, we found that the tensor composition $v_a + v_b + T(v_a,v_b,\cdot)$ yields worse performance than the simple additive composition $v_a+v_b$. For this reason, we consider a \emph{weighted} tensor composition $v_a + v_b + \alpha T(v_a,v_b,\cdot)$ with $\alpha \geq 0$. Following \cite{mitchell2010composition}, we split the data into a development set of 18 humans and a test set of the remaining 36 humans. We use the development set to select the optimal scalar weight for the weighted tensor composition, and using this fixed parameter, we report the results using the test set. We repeat this three times, rotating over folds of 18 subjects, and report the average results. 

As a baseline,
we also report the average results using just the additive composition, as
well as a weighted additive composition $\beta v_a + v_b$, where $\beta\geq
0$. We select $\beta$ using the development set (``weighted1") and the test set (``weighted2").  We allow  weighted2
to cheat in this way because it provides an upper bound on the
best possible weighted additive composition. Additionally, we compare our method to the smoothed inverse frequency (``sif") weighting method that has been demonstrated to be near state-of-the-art for sentence embedding tasks (\cite{arora2016simple}). We also test embeddings of the form $p + \gamma \omega_a\omega_b T(v_a,v_b,\cdot)$ (``sif+tensor"), where $p$ is the sif embedding for $(a,b)$, $\omega_a$ and $\omega_b$ are the smoothed inverse frequency weights used in the sif embeddings, and $\gamma$ is a positive weight selected using the development set. The motivation for this hybrid embedding is to evaluate the extent to which the sif embedding and tensor component can independently improve performance on this task. 

We perform these same experiments using two other standard sets of pre-computed word embeddings,  namely GloVe\footnote{obtained from \url{https://nlp.stanford.edu/projects/glove/}} and carefully optimized cbow vectors\footnote{obtained from \url{https://fasttext.cc/docs/en/english-vectors.html}} (\cite{mikolov2017advances}). We re-trained the composition tensor using the same corpus and technique as before, but substituting these pre-computed embeddings in place of the RAND-WALK (rw) embeddings. However, a bit of care must be taken here, since our syntactic RAND-WALK model constrains the norm of the word embeddings to be related to the frequency of the words, whereas this is not the case with the pre-computed embeddings. To deal with this, we rescaled  the pre-computed embeddings sets to have the same norms as their counterparts in the rw embeddings, and then trained the composition tensor using these rescaled embeddings. At test time, we use the \emph{original} embeddings to compute the additive components of our compositions, but use the \emph{rescaled} versions when computing the tensor components.  

The results for adjective-noun phrases are given in Tables \ref{tbl:phrase_an}. 
We observe that the tensor composition
outperforms the additive compositions on all embedding sets apart from the
Spearman correlation on the cbow vectors, where the weighted additive 2
method has a slight edge. 
The sif embeddings outperform the additive and tensor methods, but combining the sif embeddings and the tensor components yields the best performance across the board, suggesting that the composition tensor captures additional information beyond the individual word embeddings that is useful for this task. 
There was high consistency across the folds for the optimal weight parameter $\alpha$, with $\alpha = 0.4$ for the rw embeddings, $\alpha =.2, .3$ for the glove embeddings, and $\alpha = .3$ for the cbow embeddings. For the sif+tensor embeddings, $\gamma$ was typically in the range $[.1,.2]$. 

The results for verb-object phrases are given in  Table \ref{tbl:phrase_vo}. Predicting phrase similarity appears to be harder in this case. Notably, the sif embeddings perform worse than unweighted vector addition. As before, we can improve the sif embeddings by adding in the tensor component. The tensor composition method achieves the best results for the glove and cbow vectors, but weighted addition works best for the randwalk vectors. 

Overall, these results demonstrate that the composition tensor can improve the quality of the phrase embeddings in many cases, and the improvements are at least somewhat orthogonal to improvements resulting from the sif embedding method. This suggests that a well-trained composition tensor used in conjunction with high quality word embeddings and additional embedding composition techniques has the potential to improve performance in downstream NLP tasks.

\begin{table}[t]
\caption{Correlation measures between human judgments and embedding-based similarity scores (Spearman, Pearson) for adjective-noun phrases across three embedding sets (top scores in each row are bolded)}
\label{tbl:phrase_an}
\centering
\begin{tabular}{lcccccc}
 & additive & weighted1 & weighted2 & tensor & sif & sif+tensor\\
\hline\\
rw & .446, .438 & .444, .448 & .452, .453 & .460, .465 &  .482, .477 & \textbf{.482}, \textbf{.481}\\
glove & .357, .336 & .351, .334 & .358, .345 & .368, .347 & .429, .434 & \textbf{.433}, \textbf{.437}\\
cbow & .471, .452 & .469, .451& .476, .456 & .474, .471& .489, .482&\textbf{.492}, \textbf{.484}\\
\end{tabular}
\end{table}

\begin{table}[t]
\caption{Correlation measures between human judgments and embedding-based similarity scores (Spearman, Pearson) for verb-object phrases}
\label{tbl:phrase_vo}
\centering
\begin{tabular}{lcccccc}
 & additive & weighted1 & weighted2 & tensor & sif & sif+tensor\\
\hline\\
rw &.379, .370&.391, .385 & \textbf{.392}, \textbf{.387}& .379, .370& .378, .351& .378, .363\\
glove &.397, .400& .398, .404& .401, .404& .410, \textbf{.420}& .387, .380& \textbf{.411}, .409\\
cbow & .423, .414& .423, .410& \textbf{.428}, .415& \textbf{.428}, \textbf{.422}& .404, .404& .420, .417\\
\end{tabular}
\end{table}

\section*{Acknowledgments}
 We thank Yingyu Liang, Mohit Bansal, and Eric Bailey for helpful discussions. Support from NSF CCF-1704656 is gratefully acknowledged.

\clearpage
\bibliography{word_embeddings}
\bibliographystyle{apalike}

\clearpage

\appendix

\section{Additional qualitiative results}
In this section we present  additional qualitiative results demonstrating the use of the composition tensor for 
the retrieval of words related to adjective-noun and verb-object phrases. 

In Table~\ref{tbl:more_vo}, we show results for the phrases ``giving birth", ``solve problem",  and ``changing name". These phrases are all among the top 500 most frequent verb-object phrases appearing in the training corpus. In these examples, the tensor-based phrase embeddings retrieve words that are generally markedly more related to the phrase at hand, and there are no strange false positives. These examples demonstrate how a verb-object phrase can encompass an action that isn't implied simply by the object or verb alone. The additive composition doesn't capture this action as well as the tensor composition.

\begin{table}
\caption{Top 10 words relating to various verb-object phrases}
\label{tbl:more_vo}
\centering
\begin{tabular}{llllll}
\toprule
     \multicolumn{2}{c}{giving birth} & \multicolumn{2}{c}{solve problem} & \multicolumn{2}{c}{changing name} \\
\cmidrule(r){1-2}\cmidrule(r){3-4}\cmidrule(r){5-6}
   additive &         tensor &     additive &          tensor &       additive &         tensor \\
\midrule
birth &   stillborn &       problem &    analytically &          name &       rebrand \\
      giving &      unborn &         solve &      creatively &      changing &       refocus \\
       place &    pregnant &      problems &           solve &        change &     redevelop  \\
       death &    fathered &       solving &  subconsciously &       changed &    rebranding \\
        give &     litters &        solved &        devising &         names &         forgo\\
        date &  childbirth &        solves &          devise &     referring &        divest \\
        gave &     remarry &    understand &     proactively &         title &  rechristened  \\
     summary &     newborn &       resolve &         solvers &          word &        afresh \\
       gives &   gestation &      solution &     extrapolate &      actually &     rebranded \\
       given &      eloped &      question &     rationalize &     something &        opting \\
\bottomrule
\end{tabular}
\end{table}

Moving on to adjective-noun phrases, in Table~\ref{tbl:political_entities}, we show results for the phrases ``United States", ``Soviet Union", and ``European Union". 
These phrases, which all occur with comparatively high frequency in the corpus, were identified as adjective-noun phrases by the tagger, but they function more as compound proper nouns. In each case, the additive composition retrieves reasonably relevant words, while the tensor composition is more of a mixed bag. In the case of ``European Union", the tensor composition does retrieve the highly relevant words eec (European Economic Community) and eea (European Economic Area), which the additive composition misses, but the tensor composition also produces several false positives. It seems that for these types of phrases, the additive composition is sufficient to capture the meaning.

\begin{table}
\caption{Top 10 words relating to various adjective-noun phrases}
\label{tbl:political_entities}
\centering
\begin{tabular}{llllll}
\toprule
     \multicolumn{2}{c}{united states} & \multicolumn{2}{c}{soviet union} & \multicolumn{2}{c}{european union} \\
\cmidrule(r){1-2}\cmidrule(r){3-4}\cmidrule(r){5-6}
   additive &         tensor &     additive &          tensor &       additive &         tensor \\
\midrule
       united &        united &        union &           union &       european &            eec \\
       states &        states &       soviet &          soviet &          union &            ebu \\
           us &    emigrating &         ussr &            sfsr &         europe &  dismemberment \\
       canada &      emirates &      russian &  disintegration &      countries &         retort \\
    countries &    immigrated &    communist &        lyudmila &     federation &       detracts \\
   california &  cartographic &       russia &   dismemberment &        nations &       arguable \\
          usa &    extradited &      soviets &        brezhnev &         soviet &           kely \\
      america &        senate &       moscow &            ussr &  organisations &            eea \\
      kingdom &   lighthouses &         sfsr &     perestroika &      socialist &    geosciences \\
      nations &     stateside &      ukraine &          zhukov &             eu &       bugzilla \\
\bottomrule
\end{tabular}
\end{table}

In Table~\ref{tbl:taste}, we fix the noun ``taste" and vary the modifying adjective to highlight different senses of the noun. In the case of ``expensive taste", both compositions retrieve words that seem to be either related to ``expensive" or ``taste", but there don't seem to be words that are intrinsically related to the phrase as a whole (with the exception, perhaps, of ``luxurious", which the tensor composition retrieves). In the case of ``awful taste", both compositions retrieve fairly similar words, which mostly relate to the physical sense of taste (rather than the more abstract sense of the word). For the phrase ``refined taste", the additive composition fails to capture the sense of the phrase and retrieves many words related to food taste (which are irrelevant in this context), whereas the tensor composition retrieves more relevant words. 
\begin{table}
\caption{Top 10 words relating to various adjective-noun phrases}
\label{tbl:taste}
\centering
\begin{tabular}{llllll}
\toprule
     \multicolumn{2}{c}{expensive taste} & \multicolumn{2}{c}{awful taste} & \multicolumn{2}{c}{refined taste} \\
\cmidrule(r){1-2}\cmidrule(r){3-4}\cmidrule(r){5-6}
     additive &       tensor &    additive &        tensor &      additive &         tensor \\
\midrule
          taste &        expensive &       taste &         taste &         taste &        refined \\
      expensive &            taste &       awful &         awful &       refined &          taste \\
        cheaper &           costly &       smell &         smell &        flavor &        sweeter \\
         flavor &    prohibitively &  unpleasant &  disagreeable &        tastes &       sensuous \\
         tastes &  computationally &      flavor &        fruity &         smell &        elegant \\
     unpleasant &          cheaper &  refreshing &         aroma &       flavour &   disagreeable \\
    inexpensive &        luxurious &   something &         fishy &         aroma &       elegance \\
          smell &          sweeter &      things &       pungent &          sour &  neoclassicism \\
         costly &      inexpensive &      really &          odor &   ingredients &     refinement \\
    ingredients &           afford &        odor &       becuase &     qualities &      perfected \\
\bottomrule
\end{tabular}
\end{table}

In Table~\ref{tbl:friend}, we fix the noun ``friend" and vary the modifying adjective, but in all three cases, the adjective-noun phrase has basically the same meaning. In the case of ``close friend" and ``dear friend", both compositions retrieve fairly relevant and similar words. In the case of ``best friend", both compositions retrieve false positives: the additive composition seems to find words related to movie awards, while the tensor composition finds unintuitive false positives. We note that in all three phrases, the tensor composition consistently retrieves the words ``confidante", ``confided" or ``confides", ``coworker", and ``protoge", all of which are fairly relevant.
\begin{table}
\caption{Top 10 words relating to various adjective-noun phrases}
\label{tbl:friend}
\centering
\begin{tabular}{llllll}
\toprule
     \multicolumn{2}{c}{close friend} & \multicolumn{2}{c}{best friend} & \multicolumn{2}{c}{dear friend} \\
\cmidrule(r){1-2}\cmidrule(r){3-4}\cmidrule(r){5-6}
       additive &          tensor &           additive &       tensor &           additive &        tensor \\

\midrule
        close &   confidante &        best &       confidante &      friend &        friend \\
       friend &    confidant &      friend &         confides &        dear &    confidante \\
      friends &     coworker &       actor &  misinterpreting &   colleague &      coworker \\
    confidant &        close &      awards &         coworker &       lover &     colleague \\
    colleague &       friend &     actress &       memoirists &     friends &          dear \\
      closest &     confided &       award &          protege &  girlfriend &     confidant \\
 collaborator &  schoolmates &   nominated &         presumes &     beloved &       dearest \\
   confidante &    classmate &     friends &        helpfully &   boyfriend &       protege \\
    classmate &      protege &  girlfriend &            matth &   classmate &      confided \\
      brother &          cuz &      writer &      regretfully &    roommate &  collaborator \\
\bottomrule
\end{tabular}
\end{table}

\subsection{Sentiment analysis}
We test the effect of using the composition tensor for a sentiment analysis task. We use the  movie review dataset of \cite{Pang+Lee:04a}  as well as the Large Movie Review dataset (\cite{maas-EtAl:2011:ACL-HLT2011}), which consist of 2,000 movie reviews and 50,000 movie reviews, respectively. For a fixed review, we identify each adjective-noun pair $(a,b)$  and compute $T(v_a,v_b,\cdot)$. We add these compositions together with the word embeddings for all of the words in the review, and then normalize the resulting sum. This vector is used as the input to a regularized logistic regression classifier, which we train using scikit-learn (\cite{scikit-learn}) with the default parameters. We also consider a baseline method where we simply add together all of the word embeddings in the movie review, and then normalize the sum. 
We evaluate the test accuracy of each method using 5-fold cross-validation on the smaller dataset and the training-test set split provided in the larger dataset. Results are shown in Table~\ref{tbl:sent}.
Although the tensor method seems to have a slight edge over the baseline, the differences are not significant. 

\begin{table}
\caption{Test accuracy for sentiment analysis task (standard deviation reported in parentheses)}
\label{tbl:sent}
\centering
\begin{tabular}{lll}
\toprule
Dataset & Additive & Tensor\\
\hline
Pang and Lee &0.741 (0.018) &0.759 (0.025)\\
Large Movie Review & 0.793& 0.794\\
\bottomrule
\end{tabular}

\end{table}

\section{Omitted proofs for Section~\ref{sec:model}}

In this section we will prove the main Theorem~\ref{thm:main}, which establishes the connection between the model parameters and the correlations of pairs/triples of words. As we explained in Section~\ref{sec:model}, a crucial step is to analyze the partition function of the model and show that the partition functions are concentrated. We will do that in Section~\ref{subsec:partition}. We then prove the main theorem in Section~\ref{subsec:mainthm}. More details and some technical lemmas are deferred to Section~\ref{subsec:auxiliary}

\subsection{Concentration of partition function}\label{subsec:partition}

In this section we will prove concentrations of partition functions (Lemma~\ref{lem:concentration}). Recall that we need the tensor to be $K$-bounded (where $K$ is a constant) for this to work.

\begin{definition} (Definition~\ref{def:bounded} restated) The composition tensor $T$ is $(K,\epsilon)$-bounded, if for any word embedding $v_a,v_b$, we have
\[
\|T(v_a,\cdot,\cdot)+I\|^2  \le \frac{Kd\epsilon^2}{\log^2 n};\|T(v_a,\cdot,\cdot)+I\|_F^2 \le Kd; \|T(v_a,v_b,\cdot)\|^2 \le Kd.
\]
\end{definition}

Note that $K$ here should be considered as an absolute constant (like 5, in fact in Section~\ref{sec:experiments} we show $K$ is less than 4). We first restate Lemma~\ref{lem:concentration} here:

\begin{lemma}[Lemma~\ref{lem:concentration} restated]
For the syntactic RAND-WALK model, there exists a constant $Z$ such that
\[
\underset{c\sim\mathcal{C}}{\textnormal{Pr}}[(1-\epsilon_z)Z \leq Z_c \leq (1+\epsilon_z)Z] \geq 1-\delta,
\]
for $\epsilon_z = \tilde O(1/\sqrt{n})$ and $\delta = \exp(-\Omega(\log^2n))$.

Furthermore, if the tensor $T$ is $(K,\epsilon)$-bounded, then for any fixed word $a \in V$, there exists a constant $Z_a$ such that
\[
\underset{c\sim\mathcal{C}}{\textnormal{Pr}}[(1-\epsilon_{z,a})Z_a\leq Z_{c,a} \leq (1+\epsilon_{z,a})Z_a ] \geq 1-\delta,
\] 
for $\epsilon_{z,a} = O(\epsilon)+\tilde{O}(1/\sqrt{n})$ and $\delta = \exp(-\Omega(\log^2 n))$.
\end{lemma}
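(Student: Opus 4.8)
The plan is to establish concentration of the two partition functions via a standard two-step argument: first handle the randomness over a fixed discourse vector $c$ using a concentration inequality for sums of (nearly) independent random variables, then use the fact that the word embeddings $v_w = s_w \hat v_w$ are themselves random with $\hat v_w \sim N(0,I)$ to pin down the value around which $Z_c$ concentrates. For the first partition function $Z_c = \sum_{w \in V} \exp(\inner{v_w, c})$, note that conditioned on $c$ and on the scalars $s_w$, the summands $\exp(\inner{v_w,c})$ are independent across $w$, and since $\inner{\hat v_w, c} \sim N(0,1)$, each summand is a lognormal random variable with mean $\exp(s_w^2/2)$ and a controlled subexponential tail. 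Writing $Z := \E_{c,\mathscr V}[Z_c] = n \cdot \E_s[\exp(s^2/2)]$ (using spherical symmetry so the expectation does not depend on $c$), I would apply a Bernstein-type bound (or the Chernoff bound already used in \cite{arora2015rand}) to show $|Z_c - \E_{\mathscr V}[Z_c \mid c]| \le \epsilon_z Z$ with probability $1-\delta$ over $c$, and separately that $\E_{\mathscr V}[Z_c \mid c]$ is within the same tolerance of $Z$; the scaling $\epsilon_z = \tilde O(1/\sqrt n)$ comes from the $n$ independent summands and $\delta = \exp(-\Omega(\log^2 n))$ from truncating the lognormal tails at level $\mathrm{poly}\log n$. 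The boundedness of $s$ by $\kappa$ is exactly what keeps the variance proxy of each summand bounded, so this part goes through essentially as in the original RAND-WALK analysis.

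For the second partition function $Z_{c,a} = \sum_{w \in V} \exp(\inner{c, v_w} + T(v_a, v_w, c))$, the key observation is that the exponent can be rewritten as $\inner{v_w,\, c + T(v_a,\cdot,c)}$... more precisely, using the tensor identities from the preliminaries, $\inner{c,v_w} + T(v_a,v_w,c) = \inner{v_w,\, (I + T(v_a,\cdot,\cdot)^\top) c}$ where I am treating $T(v_a,\cdot,\cdot)$ as a $d \times d$ matrix. Thus $Z_{c,a}$ has exactly the same form as $Z_c$ but with $c$ replaced by the (data-dependent) vector $u := (I + T(v_a,\cdot,\cdot)^\top)c$. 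I would then repeat the concentration argument: conditioned on $u$, the summands $\exp(\inner{v_w,u})$ are independent lognormals with mean $\exp(s_w^2 \|u\|^2/2)$, and the $(K,\epsilon)$-boundedness hypothesis $\|T(v_a,\cdot,\cdot)+I\|_F^2 \le Kd$ together with $\|T(v_a,\cdot,\cdot)+I\|^2 \le Kd\epsilon^2/\log^2 n$ is precisely what controls both the typical size of $\|u\|$ and its fluctuation as $c$ ranges over the sphere. Defining $Z_a := \E_{c,\mathscr V}[Z_{c,a}]$, the additional error term $O(\epsilon)$ in $\epsilon_{z,a}$ will arise from the fact that $\|u\|^2$ is not exactly constant over the sphere but varies by $O(\epsilon^2 \cdot d \cdot \text{something})$; one needs $\|u\|^2/d$ concentrated, which follows because $u = c + w$ where $\|w\| = \|T(v_a,\cdot,\cdot)c\| \le \|T(v_a,\cdot,\cdot)+I\| + 1$ has spectral-norm-bounded variation, giving the $O(\epsilon)$ multiplicative slack.

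The main obstacle I anticipate is the dependence structure in $Z_{c,a}$: unlike $Z_c$, the "effective direction" $u$ depends on $v_a$, and when we later want co-occurrence probabilities involving the pair $[a,b]$ we will be conditioning on $v_a$ (and $v_b$) being specific vectors while the remaining $v_w$'s are fresh Gaussians — so I should be careful to phrase the lemma so that $Z_a$ is a deterministic constant depending only on the model parameters and $a$ (not on the fluctuations of the other word vectors), which requires showing the inner conditional expectation $\E_{\{v_w\}_{w \ne a}}[Z_{c,a} \mid c, v_a]$ is itself concentrated over $c$ and over $v_a$. The cleanest route is to absorb $\inner{v_a, \cdot}$-type correlations into the boundedness constants and note that a single summand $w = a$ contributes only an $O(1/n)$ fraction, negligible against $\epsilon_{z,a}$. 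A secondary technical point is ensuring the Bernstein bound's variance term, which involves $\sum_w \E[\exp(2\inner{v_w,u})]$, stays within a constant factor of $Z_a^2/n$; this again uses $\|u\| = O(1)$, which is where the first boundedness condition (the spectral-norm one with the $\log^2 n$ denominator) does its work — it forces $\|u\|^2 \le 1 + Kd\epsilon^2/\log^2 n + (\text{cross term})$, and one checks the cross term $2\inner{c, T(v_a,\cdot,\cdot)c}$ is also $O(1)$ via the Frobenius bound, so $\|u\|^2 = O(1)$ as needed. Everything else is routine truncation and union bound.
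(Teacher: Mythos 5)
Your proposal matches the paper's proof in all essentials: the first part is delegated to the original RAND-WALK analysis, and for $Z_{c,a}$ you rewrite the exponent as $\inner{v_w,(I+T(v_a,\cdot,\cdot))^\top c}$, show that $\|(I+T(v_a,\cdot,\cdot))^\top c\|^2$ concentrates over the sphere around a constant (the Frobenius bound controlling the mean and the spectral bound controlling the $O(\epsilon)$ fluctuation via weighted $\chi^2$ tails), observe that the expected partition function depends only on this norm, and finish with the finite-$n$ concentration over the word vectors plus a union/Markov argument — exactly the paper's route. The one quibble is your closing claim that the spectral bound forces $\|u\|^2=O(1)$ pointwise: it only gives $Kd\epsilon^2/\log^2 n$, which need not be $O(1)$, but this does not matter because your main argument (like the paper's) only needs $\|u\|^2=O(1)$ with high probability over $c$, which follows from the Frobenius condition.
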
 

In fact, the first part of this Lemma is exactly Lemma 2.1 in \cite{arora2015rand}. Therefore we will focus on the proof of the second part. 

For the second part, we know the probability of choosing a word $b$ is proportional to $\exp(T(v_a,v_b, c) +\inner{c,v_b}) = \exp(\inner{T(v_a,\cdot, c)+c, v_b})$. 

If the probability of choosing word $w$ is proportional to $\exp(\inner{r,v_w})$ for some vector $r$ (think of $r = T(v_a,\cdot, c)+c$), then in expectation the partition function should be equal to $n\E_{v\sim \mathcal{D}_V}[\exp(\inner{r,v})]$ (here $\mathcal{D}_V$ is the distribution of word embedding). When the number of words is large enough, we hope that with high probability the partition function is close to its expectation. Since the Gaussian distribution is spherical, we also know that the expected partition function $n\E_{v\sim \mathcal{D}_V}[\exp(\inner{r,v})]$ should only depend on the norm of $r$. Therefore as long as we can prove the norm of $r = T(v_a,\cdot, c)+c$ remain similar for most $c$, we will be able to prove the desired result in the lemma.

We will first show the norm of $r = T(v_a,\cdot, c)+c$ is concentrated if the tensor $T$ is $(K,\epsilon)$-bounded. Throughout all subsequent proofs, we assume that $\epsilon < 1$ and $d \geq \log^2n/\epsilon^2$. 

%
%
%
%

\begin{lemma}
Let $v_a$ be a fixed word vector, and let $c$ be a random discourse vector. If $T$ is $(K,\epsilon)$-bounded with $d \geq \log^2n/\epsilon^2$, we have
\[
\Pr[\|T(v_a,\cdot,c)+c\|^2 \in L\pm O(\epsilon)] \geq 1-\delta,
\]
where $0\le L\le K$ is a constant that depends on $v_a$, and $\delta = \exp(-\Omega(\log^2n))$.
\label{lem:tensor_concentration}
\end{lemma}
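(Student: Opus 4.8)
The plan is to write $\|T(v_a,\cdot,c)+c\|^2 = \|(T(v_a,\cdot,\cdot)+I)c\|^2 = c^\top M c$, where $M := (T(v_a,\cdot,\cdot)+I)^\top(T(v_a,\cdot,\cdot)+I)$ is a fixed positive semidefinite matrix depending only on $v_a$. Since $c$ is (in stationarity) uniform on the unit sphere, we set $L := \E_{c\sim\mathcal{C}}[c^\top M c] = \Tr(M)/d = \|T(v_a,\cdot,\cdot)+I\|_F^2/d$, which by $(K,\epsilon)$-boundedness satisfies $0 \le L \le K$, giving the stated range for the constant $L$. So the lemma reduces to a concentration statement: the quadratic form $c^\top M c$ of a fixed PSD matrix, evaluated at a uniform point on the sphere, concentrates around its mean $\Tr(M)/d$ with deviation $O(\epsilon)$ except with probability $\exp(-\Omega(\log^2 n))$.

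First I would reduce the uniform-on-sphere statement to a Gaussian one: writing $c = g/\|g\|$ with $g\sim N(0,I_d)$, we have $c^\top M c = (g^\top M g)/\|g\|^2$, and $\|g\|^2$ concentrates around $d$ up to a $(1\pm \tilde O(1/\sqrt d))$ factor with the required failure probability by standard chi-squared tails, so it suffices to show $g^\top M g$ concentrates around $\Tr(M)$. Next I would apply the Hanson–Wright inequality (or a direct Bernstein-type bound after diagonalizing $M$): for $g\sim N(0,I_d)$,
\[
\Pr\big[\,|g^\top M g - \Tr(M)| \ge t\,\big] \le 2\exp\!\left(-c_0\min\!\Big(\frac{t^2}{\|M\|_F^2},\,\frac{t}{\|M\|}\Big)\right).
\]
Here the two boundedness conditions enter precisely: $\|M\| = \|T(v_a,\cdot,\cdot)+I\|^2 \le Kd\epsilon^2/\log^2 n$ and $\|M\|_F \le \|M\|^{1/2}\,\|M\|_F^{1/2}\!\cdot$—more carefully, $\|M\|_F \le \sqrt{\|M\|\cdot\Tr(M)} \le \sqrt{(Kd\epsilon^2/\log^2 n)(Kd)} = Kd\epsilon/\log n$. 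Choosing $t = \Theta(\epsilon d)$, the first term in the $\min$ is $t^2/\|M\|_F^2 = \Theta(\epsilon^2 d^2)/(K^2 d^2\epsilon^2/\log^2 n) = \Omega(\log^2 n)$, and the second is $t/\|M\| = \Theta(\epsilon d)/(Kd\epsilon^2/\log^2 n) = \Omega(\log^2 n/\epsilon) = \Omega(\log^2 n)$ since $\epsilon < 1$. Hence $|g^\top M g - \Tr(M)| \le O(\epsilon d)$ except with probability $\exp(-\Omega(\log^2 n))$. Dividing by $\|g\|^2 = d(1\pm\tilde O(1/\sqrt d))$ and recalling $\Tr(M)/d = L = O(1)$, we get $c^\top M c = L \pm O(\epsilon) \pm \tilde O(1/\sqrt d)$, and since the hypothesis $d \ge \log^2 n/\epsilon^2$ forces $1/\sqrt d \le \epsilon/\log n = O(\epsilon)$, the error is $O(\epsilon)$ as claimed.

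The main obstacle is getting the right dependence in the two-sided tail so that the failure probability is genuinely $\exp(-\Omega(\log^2 n))$ rather than something weaker like $\exp(-\Omega(d\epsilon^2))$: this is exactly why the spectral-norm condition on $T(v_a,\cdot,\cdot)+I$ has the $\log^2 n$ in the denominator — it is what makes the sub-exponential (linear-in-$t$) branch of Hanson–Wright also give $\Omega(\log^2 n)$, not just the sub-Gaussian branch. A secondary technical point is that the model's discourse vectors are not exactly stationary; but since we only use the stationary (uniform) distribution here, and the random-walk mixing/perturbation by $\epsilon_w$ is handled separately (it contributes to $\epsilon_p$ in Theorem \ref{thm:main}, not here), I would state and prove the lemma purely for $c\sim\mathcal{C}$ uniform, as the lemma statement does. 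One should also double-check that the positive semidefiniteness of $M$ (hence $L\ge 0$) and the trivial bound $L = \Tr(M)/d \le \|M\|_F^2/d \cdot$—rather, $L \le \|T(v_a,\cdot,\cdot)+I\|_F^2/d \le K$—are the only structural facts needed, which they are.
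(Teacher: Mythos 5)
Your proposal is correct and follows essentially the same route as the paper's proof: write $c=g/\|g\|$ for Gaussian $g$, identify $L=\|T(v_a,\cdot,\cdot)+I\|_F^2/d$, and control the numerator as a generalized $\chi^2$ variable whose sub-exponential tail branch is tamed precisely by the $\log^2 n$ factor in the spectral-norm bound (the paper uses the Laurent--Massart tail bound after an SVD, which for a PSD quadratic form in a standard Gaussian is the same estimate as the Hanson--Wright inequality you invoke). The only cosmetic difference is that the paper bounds the denominator deviation directly at level $\epsilon d$ with the same $x=\Theta(\log^2 n)$ choice rather than routing through $\tilde O(1/\sqrt d)$ and the hypothesis $d\ge \log^2 n/\epsilon^2$, but the two are equivalent.
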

\begin{proof}
Since $c$ is a uniform random vector on the unit sphere, we can represent $c$ as $c = z/\|z\|$, where $z\sim N(0,I)$ is a standard spherical Gaussian vector. For ease of notation, let $M = T(v_a,\cdot,\cdot) + I$, and write the singular value decomposition of $M$ as $M = U\Sigma V^T$. Note that $\Sigma = \text{diag}(\lambda_1,\ldots,\lambda_d)$ and $U$ and $V$ are orthogonal matrices, so that in particular, the random variable $y = V^Tz$ has the same distribution as $z$, i.e. its entries are i.i.d. standard normal random variables. Further, $\|Ux\|^2 = \|x\|^2$ for any vector $x$, since $U$ is orthogonal. 
Hence, we have 
\[
\|T(v_a,\cdot,c)+c\|^2 = \frac{1}{\|z\|^2}\|Mz\|^2 = \frac{1}{\|z\|^2}\|U\Sigma y\|^2 = \frac{\sum_{i=1}^d \lambda_i^2y_i^2}{\sum_{i=1}^d z_i^2}.
\]
Since both the numerator and denominator of this quantity are generalized $\chi^2$ random variables, we can
apply Lemma \ref{lem:weighted_chi} to get tail bounds on both. 
Observe that by assumption, we have $\lambda_i^2 \le Kd\epsilon^2/\log^2 n$ for all $i$, and $\sum_{i=1}^d \lambda_i^2 \le Kd$. Set $A = \sum_{i=1}^d\lambda_i^2y_i^2$ and $B = \sum_{i=1}^d z_i^2$. Let $\lambda_{max}^2 = \underset{1\leq i \leq d}{\max} \lambda_i^2$.  Note that $\mathbb{E}[A] = \sum_{i=1}^d \lambda_i^2 \le Kd$ and $\mathbb{E}[B] = d$. 

We will apply Lemma~\ref{lem:weighted_chi} to prove concentration bounds for $A$, in this case we have
$$
\Pr\left[|A-\E[A]| \ge 2\sqrt{\sum_{i=1}^d \lambda_i^4} \sqrt{x} + 2\lambda_{max}^2x\right] \le 2\exp(-x).
$$

Under our assumptions, we know $\lambda_{max}^2\le Kd\epsilon^2/\log^2 n$ and  $\sqrt{\sum_{i=1}^d \lambda_i^4} \le \sqrt{\lambda_{max}^2 \sum_{i=1}^d \lambda_i^2} \le Kd\epsilon/\log n$. Take $x = \frac{1}{16}\log^2 n$, we know $2\sqrt{\sum_{i=1}^d \lambda_i^4} \sqrt{x} + 2\lambda_{max}^2 x] \le Kd\epsilon$. Therefore
$$
\Pr[|A-\E[A]| \ge Kd\epsilon] \le 2\exp(-\Omega(\log^2n)).
$$

Similarly, we can apply Lemma~\ref{lem:weighted_chi} to $B$ (in fact we can apply simpler concentration bounds for standard $\chi^2$ distribution), and we get

$$
\Pr[|B-\E[B]| \ge 2\sqrt{d}\sqrt{x}+2x] \le 2\exp(-x).
$$

If we take $x= \frac{1}{16}\log^2 n$, we know $2\sqrt{d}\sqrt{x}+2x \le \epsilon d$. This implies

$$
\Pr[|B-\E[B]| \ge d\epsilon] \le 2\exp(-\Omega(\log^2n)).
$$

When both events happen we know $|\frac{A}{B} - \frac{\E[A]}{\E[B]}| \le 4K\epsilon = O(\epsilon)$ (here $K$ is considered as a constant). This finishes the proof.

\end{proof}

Using this lemma, we will show that the expected condition number $n\E_{v\sim \mathcal{D}_V}[\exp(\inner{r,v})]$ (where $r = T(v_a,\cdot,c)+c$) is concentrated

\begin{lemma}\label{lem:expectationconcentration}
Let $v_a$ be a fixed word vector, and let $c$ be a random discourse vector. If $T$ is $(K,\epsilon)$-bounded, there exists $Z_a$ such that we have
\[
\Pr[n\E_{v\sim\mathcal{D}_V}[\exp(\inner{T(v_a,\cdot,c)+c,v})]\in Z_a(1\pm O(\epsilon) \geq 1-\delta,
\]
where $Z_a = \Theta(n)$ depends on $v_a$, and $\delta = \exp(-\Omega(\log^2n))$.
\end{lemma}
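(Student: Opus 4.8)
The plan is to reduce this statement to Lemma~\ref{lem:tensor_concentration}. The key observation is that for a \emph{fixed} vector $r \in \R^d$, the quantity $\E_{v\sim\mathcal{D}_V}[\exp(\inner{r,v})]$ depends on $r$ only through $\|r\|^2$. Indeed, writing $v = s\hat{v}$ with $\hat v \sim N(0,I)$ and $s \in (0,\kappa]$ an independent scalar with $\E[s] = \tau$, the inner product $\inner{r,\hat v}$ is distributed as $N(0,\|r\|^2)$, so conditioning on $s$ and applying the Gaussian moment generating function gives
\[
\E_{v\sim\mathcal{D}_V}[\exp(\inner{r,v})] \;=\; \E_{s}\!\left[\exp\!\left(\tfrac{s^2}{2}\|r\|^2\right)\right] \;=:\; g\!\left(\|r\|^2\right),
\]
where $g(t) := \E_s[\exp(s^2 t/2)]$. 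Thus the expected partition function is $n\,\E_{v}[\exp(\inner{r,v})] = n\, g(\|r\|^2)$, a function of $\|r\|^2$ alone.

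Next I would record two elementary facts about $g$ on the bounded interval $t \in [0, K+1]$, which for $\epsilon$ below an absolute constant contains every relevant value of $\|r\|^2$. First, $g$ is bounded away from $0$ and $\infty$ there: since $s \in (0,\kappa]$ and $t \geq 0$, we have $1 \leq g(t) \leq \exp(\kappa^2(K+1)/2)$, so $g(t) = \Theta(1)$. Second, $g$ is log-Lipschitz on this interval, because
\[
\frac{d}{dt}\log g(t) \;=\; \frac{\E_s[(s^2/2)\exp(s^2 t/2)]}{\E_s[\exp(s^2 t/2)]} \;\leq\; \frac{\kappa^2}{2},
\]
using $s^2/2 \leq \kappa^2/2$ pointwise; hence $g(t')/g(t) = \exp(O(|t'-t|))$ for all $t,t'$ in the interval.

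Finally I would apply Lemma~\ref{lem:tensor_concentration} to $r = T(v_a,\cdot,c)+c$: it produces a constant $L = L(v_a) \in [0,K]$ with $\Pr[\,\|r\|^2 \in L \pm O(\epsilon)\,] \geq 1-\delta$ for $\delta = \exp(-\Omega(\log^2 n))$. On this event, the log-Lipschitz bound yields $g(\|r\|^2) = g(L)\cdot\exp(O(\epsilon)) = g(L)\,(1 \pm O(\epsilon))$. Setting $Z_a := n\, g(L)$, the boundedness of $g$ gives $Z_a = \Theta(n)$, and we conclude $n\,\E_{v\sim\mathcal{D}_V}[\exp(\inner{T(v_a,\cdot,c)+c,v})] \in Z_a\,(1 \pm O(\epsilon))$ with probability at least $1-\delta$, as desired. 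The only point requiring any care is ensuring $\|r\|^2$ stays within a fixed bounded interval so that $g$ has uniformly bounded log-derivative there — but this is exactly what Lemma~\ref{lem:tensor_concentration} (via the $(K,\epsilon)$-boundedness of $T$) provides, since it pins $\|r\|^2$ to $L + O(\epsilon) \leq K+1$; the rest is a routine Gaussian computation.
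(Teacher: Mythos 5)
Your proposal is correct and follows essentially the same route as the paper's proof: both reduce to the observation that $\E_{v}[\exp(\inner{r,v})]=g(\|r\|^2)$ with $g(t)=\E_s[\exp(s^2t/2)]$, invoke Lemma~\ref{lem:tensor_concentration} to pin $\|r\|^2$ to $L\pm O(\epsilon)$, and use the bound $g'(t)\le (\kappa^2/2)g(t)$ to convert this into a $(1\pm O(\epsilon))$ multiplicative fluctuation around $Z_a=n\,g(L)$. Your additional remark justifying $Z_a=\Theta(n)$ via the boundedness of $g$ on $[0,K+1]$ is a small but welcome point the paper leaves implicit.
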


\begin{proof}
We know $v = s\cdot \hat{v}$ where $\hat{v}\sim N(0,I)$ and $s$ is a (random) scaling. Let $r = T(v_a,\cdot,c)+c$. Conditioned on $s$ we know $\inner{r,v}$ is equivalent to a Gaussian random variable with standard deviation $\sigma = \|r\|s$. For this random variable we know

\begin{align*}
\mathbb{E}[\exp(\inner{r,v})|s] &= \int_{x}\frac{1}{\sigma\sqrt{2\pi}}\exp\left(-\frac{x^2}{2\sigma^2}\right)\exp(x)dx\\
&= \int_x\frac{1}{\sigma\sqrt{2\pi}}\exp\left(-\frac{(x-\sigma^2)^2}{2\sigma^2} + \sigma^2/2\right)dx\\
&= \exp(\sigma^2/2).
\end{align*}
Hence,
\[
\mathbb{E}[\exp(\inner{r,v})|s] = \exp(s^2\|r\|^2/2).
\]

Let $g(x) = \E_{s}[\exp(s^2x/2)]$, we know $g'(x) = \E_{s}[\exp(s^2 x/2)\cdot (s^2/2)] \le \kappa^2/2 \cdot g(x)$. In particular, this implies $g(x+\gamma) \le \exp(\kappa^2\gamma/2)g(x)$ (for small $\gamma$). 

By Lemma~\ref{lem:tensor_concentration}, we know with probability at least $1-\Omega(\log^2 n)$, $\|r\|^2 \in L\pm O(\epsilon)$. Therefore, when this holds, we have
$$
n\E_{v\sim\mathcal{D}_V}[\exp(\inner{r,v})] \in n g(L-O(\epsilon))\cdot [1,\exp(O(\epsilon\kappa^2/2)].
$$
The multiplicative factor on the RHS is bounded by $1+O(\epsilon)$ when $\epsilon$ is small enough (and $\kappa$ is a constant). This finishes the proof.
\end{proof}

Now we know the expected partition function is concentrated (for almost all discourse vectors $c$), it remains to show when we have finitely many words the partition function is concentrated around its expectation. This was already proved in \cite{arora2015rand}, we use their lemma below:

\begin{lemma}\label{lem:embeddingconcentration}
For any fixed vector $r$ (whose norm is bounded by a constant), with probability at least $1-\exp(-\Omega(\log^2 n))$ over the choices of the words, we have
$$
\sum_{i=1}^n \exp(\inner{r,v_i}) \in n\E_{v\sim \mathcal{D}_V}[\exp(\inner{r,v})] (1\pm \epsilon_z),
$$
where $\epsilon_z = \tilde{O}(1/\sqrt{n})$.
\end{lemma}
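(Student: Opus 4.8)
The plan is to view $\sum_{i=1}^n \exp(\inner{r,v_i})$ as a sum of the i.i.d.\ random variables $X_i := \exp(\inner{r,v_i})$ and prove a multiplicative concentration around $n\mu$, where $\mu := \E_{v\sim\mathcal{D}_V}[\exp(\inner{r,v})]$. First I would record the basic properties of $X_i$: writing $v_i = s_i\hat v_i$ with $\hat v_i\sim N(0,I_d)$ and $s_i\in[0,\kappa]$, conditioning on $s_i$ makes $\inner{r,v_i}$ a centered Gaussian of variance $s_i^2\|r\|^2\le\kappa^2\|r\|^2 = O(1)$, so $\inner{r,v_i}$ is sub-Gaussian with an absolute-constant parameter, and the Gaussian moment generating function gives $\mu=\E[X_i]\in[1,O(1)]$ and $\E[X_i^2]=O(1)$, hence $n\mu=\Theta(n)$. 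The genuine obstacle is that $X_i$ is essentially log-normal, so its moment generating function is infinite and Bernstein/Chernoff cannot be applied to $\sum_i X_i$ directly; the fix (as in \cite{arora2015rand}) is truncation, and the delicate part is calibrating the truncation level.

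Next I would fix $\Delta := \tfrac12\log n$ and work with $\tilde X_i := \min(X_i, e^\Delta)$, an i.i.d.\ sequence bounded by $M := e^\Delta = \sqrt n$. Two routine estimates reduce the problem to $\sum_i\tilde X_i$. First, by the sub-Gaussian tail $\Pr[\inner{r,v_i} > \Delta]\le\exp(-\Omega(\Delta^2)) = \exp(-\Omega(\log^2 n))$, so a union bound over the $n$ words shows that with probability $1-\exp(-\Omega(\log^2 n))$ every $\inner{r,v_i}\le\Delta$, and on that event $\sum_i X_i = \sum_i\tilde X_i$. Second, the truncation barely moves the mean: $\mu - \E[\tilde X_i]\le\E[e^{\inner{r,v_i}}\1[\inner{r,v_i}>\Delta]]$, and completing the square in the Gaussian integral $\int_\Delta^\infty\frac{1}{\sigma\sqrt{2\pi}}e^{-y^2/2\sigma^2}e^y\,dy$ with $\sigma = O(1)$ bounds this by $\exp(-\Omega(\log^2 n))$, so $n(\mu - \E[\tilde X_i]) = \exp(-\Omega(\log^2 n))$, far below the target error scale $\tilde O(\sqrt n)$.

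Finally I would apply Bernstein's inequality to the bounded i.i.d.\ sum $\sum_i\tilde X_i$: with $V := n\Var(\tilde X_i)\le n\E[X_i^2] = O(n)$ and envelope $M = \sqrt n$, the deviation $u := \sqrt n\log^2 n$ gives
\[
\Pr\big[\,\big|\textstyle\sum_i\tilde X_i - n\E[\tilde X_i]\big| \ge u\,\big] \;\le\; 2\exp\!\Big(-\tfrac{u^2/2}{V + Mu/3}\Big) \;=\; \exp(-\Omega(\log^2 n)),
\]
because $Mu/3 = \Theta(n\log^2 n)$ dominates $V$ while $u^2/(Mu) = u/M = \log^2 n$. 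A union bound over the failure events in the two reductions and this last step then yields $\sum_i X_i\in n\mu\pm O(u) = n\mu(1\pm\epsilon_z)$ with $\epsilon_z = O(u)/(n\mu) = \tilde O(1/\sqrt n)$, with probability $1-\exp(-\Omega(\log^2 n))$. The one point requiring care is the choice of $\Delta$: it must be large enough that $\Pr[\text{some }\inner{r,v_i}>\Delta]$ and the mean correction are $\exp(-\Omega(\log^2 n))$, yet small enough that Bernstein with the crude bound $\tilde X_i\le e^\Delta$ still beats $\exp(-\Omega(\log^2 n))$ at deviation scale $\tilde O(\sqrt n)$; $\Delta = \tfrac12\log n$ threads this needle (anything of the form $\tfrac12\log n - O(\log\log n)$ works too, the latter merely shrinking the polylog factor inside $\epsilon_z$).
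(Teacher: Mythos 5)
Your argument is correct: the truncation at $\Delta=\tfrac12\log n$, the union bound showing the truncated and untruncated sums agree except on an event of probability $\exp(-\Omega(\log^2 n))$, the Gaussian-integral bound showing the truncation shifts the mean by only $\exp(-\Omega(\log^2 n))$, and the Bernstein step with envelope $M=\sqrt n$ and variance proxy $O(n)$ at deviation scale $u=\sqrt n\log^2 n$ all check out, and together they give the claimed $\epsilon_z=\tilde O(1/\sqrt n)$ with failure probability $\exp(-\Omega(\log^2 n))$. The paper, however, does not prove this lemma at all: it simply cites Lemma 2.1 (Equation A.32) of \cite{arora2015rand}, which establishes the statement for $r$ a unit vector, and observes that the extension to any $r$ of constant norm is immediate because $\inner{r,v}=\inner{r/\|r\|,\,\|r\|\,s\,\hat v}$, i.e.\ the norm of $r$ can be absorbed into the bounded scaling factor $s$ in the prior $\mathcal{D}_V$. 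So the comparison is between a one-line reduction to a cited result and your self-contained derivation; your route costs more work but makes explicit exactly which properties of $\mathcal{D}_V$ are used (only that $\inner{r,v}$ is a bounded mixture of centered Gaussians, hence sub-Gaussian with constant parameter and with $\E[e^{2\inner{r,v}}]=O(1)$), and it is essentially the same truncation-plus-Bernstein mechanism that underlies the cited lemma, so nothing is lost in fidelity. If you wanted to match the paper's economy, the rescaling observation alone would have sufficed given the external reference.
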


This is essentially Lemma 2.1 in \cite{arora2015rand} (see Equation A.32). The version we stated is a bit different because we allow $r$ to have an arbitrary constant norm (while in their proof vector $r$ is the discourse vector $c$ and has norm 1). This is a trivial corollary as we can move the norm of $r$ into the distribution of the scaling factor $s$ for the word embedding.

Finally we are ready to prove Lemma~\ref{lem:concentration}.

\begin{proof}[Proof of Lemma~\ref{lem:concentration}]
The first part is exactly Lemma 2.1 in \cite{arora2015rand}.

For the second part, note that the partition function $Z_{c,a} = \sum_{i=1}^n \inner{T(v_a,\cdot,c)+c,v_i}$. We will use $\E[Z_{c,a}]$ to denote its expectation over the randomness of the word embedding $\{v_i\}$.  By Lemma~\ref{lem:expectationconcentration}, we know for at least $1-\exp(-\Omega(\log^2 n))$ fraction of discourse vectors $c$, the expected partition function is concentrated ($\E[Z_{c,a}] \in (1\pm O(\epsilon))Z_a$). Let $\mathcal{S}$ denote the set of $c$ such that Lemma~\ref{lem:expectationconcentration} holds. Now by Lemma~\ref{lem:embeddingconcentration} we know for any $x\in \mathcal{S}$, with probability at least $1-\exp(-\Omega(\log^2 n)$ $Z_{c,a} \in (1\pm \epsilon_z)\E[Z_{c,a}]$. 

Therefore we know if we consider both $c$ and the embedding as random variables, $\Pr[Z_{c,a} \in (1\pm O(\epsilon+\epsilon_z))Z_a] \ge 1-\delta'$ where $\delta' = \exp(-\Omega(\log^2 n))$. Let $S$ be the set of word embedding such that there is at least $\sqrt{\delta'}$ fraction of $c$ that does not satisfy $Z_{c,a} \in (1\pm O(\epsilon+\epsilon_z))Z_a$, we must have $\Pr[S]\cdot \sqrt{\delta'} \le \delta'$. Therefore
$$
\Pr[S] \le \sqrt{\delta'}.
$$

That is, with probability at least $1-\sqrt{\delta'}$ (over the word embeddings), there is at least $1-\sqrt{\delta'}$ fraction of $c$ such that $Z_{c,a} \in (1\pm O(\epsilon+\epsilon_z))Z_a$.

\end{proof}

\subsection{Estimating the correlations}
\label{subsec:mainthm}
In this section we prove Theorem~\ref{thm:main} and Corollary~\ref{cor:pmi}. The proof is very similar to the proof of Theorem 2.2 in \cite{arora2015rand}. We use several lemmas in that proof, and these lemmas are deferred to Section~\ref{subsec:auxiliary}.

\begin{proof}[Proof of Theorem~\ref{thm:main}]
Throughout this proof we consider two adjacent discourse vectors $c,c'$, where $c$ generated a single word $w$ and $c'$ generated a syntactic pair $(a,b)$. 

The first two results in Theorem~\ref{thm:main} are exactly the same as Theorem 2.2 in \cite{arora2015rand}. Therefore we only need to prove the result for $p([a,b])$ and $p(w,[a,b])$.

For $p([a,b])$, by definition of the model we know
$$
p([a,b]) = \E_{c'}[\frac{1}{Z_{c'}}\frac{1}{Z_{c',a}} \exp(\inner{c',v_a}+\inner{c',v_b} + T(v_a,v_b,c')].
$$
Here $Z_{c'}$ is the partition function $\sum_{i=1}^n \exp(\inner{c',v_i})$, and $Z_{c',a}$ is the partition function $\sum_{i=1}^n \exp(\inner{c',v_i} + T(v_a,v_i,c')$.

\newcommand{\F}{\mathcal{F}}
\newcommand{\nF}{\bar{\mathcal{F}}}

Let $\mathcal{F}$ be the event that $c'$ satisfies the equations in Lemma~\ref{lem:concentration}. Let $\nF$ be its negation. By Lemma~\ref{lem:concentration} we know $\Pr[\F] \ge 1-\exp(-\Omega(\log^2 n))$. Using this event, we can write 

\begin{align*}
p([a,b]) = & \E_{c'}[\frac{1}{Z_{c'}}\frac{1}{Z_{c',a}} \exp(\inner{c',v_a}+\inner{c',v_b} + T(v_a,v_b,c'))1_{\mathcal{F}}] \\ & + \E_{c'}[\frac{1}{Z_{c'}}\frac{1}{Z_{c',a}} \exp(\inner{c',v_a}+\inner{c',v_b} + T(v_a,v_b,c'))1_{\nF}].
\end{align*}

The second term can be bounded by Lemma~\ref{lem:boundnf} and the fact that $Z_{c'}Z_{c',a} \ge \beta$ from Lemma \ref{lem:lower_partition}. We know
$$
\E_{c'}[\frac{1}{Z_{c'}}\frac{1}{Z_{c',a}} \exp(\inner{c',v_a}+\inner{c',v_b} + T(v_a,v_b,c')1_{\nF}] \le \exp(-\Omega(\log^{1.8}n)).
$$

For the first term, we know by Lemma~\ref{lem:concentration} that there exists $Z, Z_a$ that are close to $Z_{c'}$ and $Z_{c',a}$. Therefore

\begin{align*}
p([a,b]) & = \E_{c'}[\frac{1}{Z_{c'}}\frac{1}{Z_{c',a}} \exp(\inner{c',v_a}+\inner{c',v_b} + T(v_a,v_b,c')1_{\mathcal{F}}] \\ & + \E_{c'}[\frac{1}{Z_{c'}}\frac{1}{Z_{c',a}} \exp(\inner{c',v_a}+\inner{c',v_b} + T(v_a,v_b,c')1_{\nF}]. \\
& \le (1+\epsilon_z)(1+\epsilon_{z,a})\E_{c'}[\frac{1}{Z}\frac{1}{Z_{a}} \exp(\inner{c',v_a}+\inner{c',v_b} + T(v_a,v_b,c')1_{\mathcal{F}}] + \exp(-\Omega(\log^{1.8}n)) \\
& \le \frac{(1+\epsilon_z)(1+\epsilon_{z,a})}{ZZ_a}\E_{c'}[\exp(\inner{c',v_a}+\inner{c',v_b} + T(v_a,v_b,c')]+ \exp(-\Omega(\log^{1.8}n)) \\
& \le \frac{(1+\epsilon_z)(1+\epsilon_{z,a})(1+\tilde{O}(1/d))}{ZZ_a} \exp(\frac{\|v_a+v_b+T(v_a,v_b,\cdot)\|^2}{2d}) + \exp(-\Omega(\log^{1.8}n)).
\end{align*}

Here the last step used Lemma~\ref{lem:boundsphere}. Since both $Z$ and $Z_a$ can be bounded by $O(n)$, and $\frac{\|v_a+v_b+T(v_a,v_b,\cdot)\|^2}{2d}$ is bounded by $(4\kappa + \sqrt{2K})^2$, we know the first term is of order $\Omega(1/n^2)$, and the second term is negligible.

For the lowerbound, we can have
\begin{align*}
p([a,b]) & = \E_{c'}[\frac{1}{Z_{c'}}\frac{1}{Z_{c',a}} \exp(\inner{c',v_a}+\inner{c',v_b} + T(v_a,v_b,c')1_{\mathcal{F}}] \\ & + \E_{c'}[\frac{1}{Z_{c'}}\frac{1}{Z_{c',a}} \exp(\inner{c',v_a}+\inner{c',v_b} + T(v_a,v_b,c')1_{\nF}]. \\
& \ge (1-\epsilon_z)(1-\epsilon_{z,a})\E_{c'}[\frac{1}{Z}\frac{1}{Z_{a}} \exp(\inner{c',v_a}+\inner{c',v_b} + T(v_a,v_b,c')1_{\mathcal{F}}] \\
& \ge \frac{(1-\epsilon_z)(1-\epsilon_{z,a})}{ZZ_a}
\left\{\E_{c'}[\exp(\inner{c',v_a}+\inner{c',v_b} + T(v_a,v_b,c'))]\right. \\
& \qquad\qquad\qquad\qquad\left. -  \E_{c'}[\exp(\inner{c',v_a}+\inner{c',v_b} + T(v_a,v_b,c'))1_{\nF}]\right\} \\
& \ge \frac{(1-\epsilon_z)(1-\epsilon_{z,a})}{ZZ_a}\left\{\E_{c'}[\exp(\inner{c',v_a}+\inner{c',v_b} + T(v_a,v_b,c'))] - \exp(-\Omega(\log^{1.8} n))\right\} \\
& \ge \frac{(1-\epsilon_z)(1-\epsilon_{z,a})(1-\tilde{O}(1/d))}{ZZ_a} \left\{\exp(\frac{\|v_a+v_b+T(v_a,v_b,\cdot)\|^2}{2d}) - \exp(-\Omega(\log^{1.8}n))\right\}.
\end{align*}

Again the last step is using Lemma~\ref{lem:boundsphere} and the term $\exp(-\Omega(\log^{1.8}n)$ is negligible. Combining the upper and lower bound, we know
$$
\log p([a,b]) = \frac{\|v_a+v_b+T(v_a,v_b,\cdot)\|^2}{2d} - \log Z - \log Z_a \pm \epsilon_p,
$$
where $\epsilon_p = O(\epsilon_z+\epsilon_{z,a}) + \tilde{O}(1/d)$. 

Now we turn to the most complicated term $\log p(w,[a,b])$. By definition we know
$$
p(w,[a,b]) = \E_{c,c'}[\frac{1}{Z_c} \exp(\inner{c,v_w})\frac{1}{Z_{c'}}\frac{1}{Z_{c',a}} \exp(\inner{c',v_a}+\inner{c',v_b} + T(v_a,v_b,c')].
$$
We will follow similar idea as before. Let $\F$ be the event that both $c,c'$ satisfy the equations in Lemma~\ref{lem:concentration} and $\nF$ be its negation. By Lemma~\ref{lem:concentration} and union bound we know $\Pr[\F] \ge 1-\exp(-\Omega(\log^2 n))$. 

We again separate the co-occurrence probability based on the event $\F$:
\begin{align*}
p(w,[a,b]) = & \E_{c,c'}[\frac{1}{Z_c} \exp(\inner{c,v_w})\frac{1}{Z_{c'}}\frac{1}{Z_{c',a}} \exp(\inner{c',v_a}+\inner{c',v_b} + T(v_a,v_b,c')1_{\mathcal{F}}] \\ & + \E_{c,c'}[\frac{1}{Z_c} \exp(\inner{c,v_w})\frac{1}{Z_{c'}}\frac{1}{Z_{c',a}} \exp(\inner{c',v_a}+\inner{c',v_b} + T(v_a,v_b,c')1_{\nF}].
\end{align*}

For the second term, we can again use Lemma~\ref{lem:boundnf} to show that it is bounded by $\exp(-\Omega(\log^{1.8} n))$. Now, using techniques similar as before, we can prove
\begin{equation}
p(w,[a,b]) = (1\pm O(\epsilon_z+\epsilon_{z,a}))\frac{1}{Z^2Z_a} \E_{c,c'}[\exp(\inner{c,v_w})\exp(\inner{c',v_a}+\inner{c',v_b}+T(v_a,v_b,c'))].\label{eq:wab}
\end{equation}

Now the final step is to use the fact that $c$ and $c'$ are close to simplify the final formula. Let $A(c') = \E_{c|c'}[\exp(\inner{c,v_w})]$, by Lemma~\ref{lem:boundwalk} we know $A(c') \in (1\pm\epsilon_w)\exp(\inner{v_w,c'})$. Therefore
\begin{align*}
& \E_{c,c'}[\exp(\inner{c,v_w})\exp(\inner{c',v_a}+\inner{c',v_b}+T(v_a,v_b,c'))] \\
 = & \E_{c'}[\exp(\inner{c',v_a}+\inner{c',v_b}+T(v_a,v_b,c')) \E_{c|c'}[\exp(\inner{c,v_w})]] \\
 = & \E_{c'}[\exp(\inner{c',v_a}+\inner{c',v_b}+T(v_a,v_b,c')) A(c')] \\
 = & (1\pm\epsilon_w) \E_{c'}[\exp(\inner{c',v_a}+\inner{c',v_b}+T(v_a,v_b,c') + \inner{c',v_w})] \\
 = & (1\pm\epsilon_w)(1\pm \tilde{O}(1/d))\exp(\frac{\|v_w + v_a + v_b + T(v_a,v_b,\cdot)\|^2}{2d}).
\end{align*}
Here the last step is again by Lemma~\ref{lem:boundsphere}. Combining this with Equation \eqref{eq:wab} gives the result.
\end{proof}

Finally we prove Corollary~\ref{cor:pmi}, which is just a simple calculation based on Theorem~\ref{thm:main}:

\begin{proof}[Proof of Corollary~\ref{cor:pmi}]
By the definition of PMI3, we know
\begin{align*}
PMI3 & = \log p(w,[a,b])+\log p(a)+\log p(b)+\log p(w) - \log p(w,a)-\log p(w,b) - \log p([a,b]).\\
& = (\frac{\|v_w + v_a + v_b + T(v_a,v_b,\cdot)\|^2}{2d} - 2\log Z - \log Z_a) + (\frac{\|v_a\|^2}{2d} + \frac{\|v_b\|^2}{2d} + \frac{\|v_w\|^2}{2d} - 3\log Z) \\
& - (\frac{\|v_w+v_a\|^2}{2d}+\frac{\|v_w+v_b\|^2}{2d} - 4\log Z) - (\frac{\|v_a+v_b+T(v_a,v_b,\cdot)\|^2}{2d} - \log Z - \log Z_a) \pm 7\epsilon \\
& = \frac{T(v_a,v_b,v_w)}{d} \pm 7\epsilon.
\end{align*}
\end{proof}

\subsection{Auxiliary lemmas}
\label{subsec:auxiliary}
\paragraph{Tail bound for $\chi^2$ distribution}
We will use the following tail bounds for the generalized $\chi^2$-squared distribution. 
\begin{lemma}(\cite{laurent2000adaptive})
\label{lem:weighted_chi}
Let $y_1,\ldots,y_d$ be i.i.d. standard normal random variables, and let $a_1,\ldots,a_d$ be nonnegative real numbers. Set $Y = \sum_{i=1}^d a_iy_i^2$ and $a = (a_1,a_2,\ldots,a_d)$. Then the following hold for any positive real number $x$:
\begin{align*}
P(Y-\mathbb{E}[Y] \geq 2 \|a\|_2\sqrt{x} + 2\|a\|_\infty x) &\leq \exp(-x)\\
P(Y - \mathbb{E}[Y] \leq -2\|a\|_2\sqrt{x}) &\leq \exp(-x).
\end{align*}
\end{lemma}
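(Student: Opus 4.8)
The plan is to prove the stated tail bounds by the classical exponential--moment (Chernoff) method specialized to weighted sums of independent $\chi^2_1$ random variables; this is exactly the argument of \cite{laurent2000adaptive}, so one legitimate option is simply to invoke that reference, but I sketch the self-contained proof. Write $Y = \sum_{i=1}^d a_i y_i^2$ with the $y_i$ i.i.d.\ standard normal, so $\mathbb{E}[Y] = \sum_i a_i$. First I would record the moment generating function exactly: since $\mathbb{E}[e^{t y_i^2}] = (1-2t)^{-1/2}$ for $t<1/2$, independence gives, for every $t$ with $0 \le t < 1/(2\|a\|_\infty)$,
\[
\log \mathbb{E}\!\left[e^{t(Y-\mathbb{E}[Y])}\right] \;=\; \sum_{i=1}^d \tfrac12\bigl(-\log(1-2a_it) - 2a_it\bigr),
\]
and, since $\mathbb{E}[e^{-s y_i^2}]=(1+2s)^{-1/2}$ is finite for every $s\ge 0$,
\[
\log \mathbb{E}\!\left[e^{-s(Y-\mathbb{E}[Y])}\right] \;=\; \sum_{i=1}^d \tfrac12\bigl(2a_is - \log(1+2a_is)\bigr)\qquad\text{for all } s\ge 0.
\]

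The next step is to bound these two log--moment generating functions by elementary scalar inequalities. For the upper tail, the inequality $-\log(1-u)-u \le \frac{u^2}{2(1-u)}$ on $[0,1)$, applied with $u = 2a_it \le 2\|a\|_\infty t$, yields
\[
\log \mathbb{E}\!\left[e^{t(Y-\mathbb{E}[Y])}\right] \;\le\; \sum_{i=1}^d \frac{(2a_it)^2}{4(1-2a_it)} \;\le\; \frac{t^2\sum_i a_i^2}{1-2\|a\|_\infty t} \;=\; \frac{t^2\|a\|_2^2}{1-2\|a\|_\infty t},
\]
which is a Bernstein--type bound with variance proxy $\nu = 2\|a\|_2^2$ and scale parameter $b = 2\|a\|_\infty$. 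For the lower tail, the inequality $\log(1+w) \ge w - w^2/2$ on $[0,\infty)$ gives the cleaner sub--Gaussian estimate $\log \mathbb{E}[e^{-s(Y-\mathbb{E}[Y])}] \le s^2\|a\|_2^2$, valid for all $s \ge 0$.

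Finally I would apply Markov's inequality to the exponential and optimize the free parameter. For the lower tail, $P\bigl(Y-\mathbb{E}[Y] \le -r\bigr) \le \exp\!\bigl(-sr + s^2\|a\|_2^2\bigr)$ for all $s\ge 0$; taking $s = r/(2\|a\|_2^2)$ gives $\exp\!\bigl(-r^2/(4\|a\|_2^2)\bigr)$, and choosing $r = 2\|a\|_2\sqrt{x}$ produces the second inequality. For the upper tail, $P\bigl(Y-\mathbb{E}[Y] \ge r\bigr) \le \exp\!\bigl(-tr + \tfrac{t^2\|a\|_2^2}{1-2\|a\|_\infty t}\bigr)$, and one optimizes over $t \in (0, 1/(2\|a\|_\infty))$; the standard fact that a log--MGF bound of the form $\tfrac{t^2\nu}{2(1-bt)}$ entails the deviation bound $P(\,\cdot\, \ge \sqrt{2\nu x} + bx) \le e^{-x}$ then gives $P\bigl(Y-\mathbb{E}[Y] \ge 2\|a\|_2\sqrt{x} + 2\|a\|_\infty x\bigr) \le e^{-x}$, since $\sqrt{2\nu x} = 2\|a\|_2\sqrt{x}$ and $bx = 2\|a\|_\infty x$.

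The only mildly delicate point is this last optimization: the chosen $t$ must stay strictly below the pole $t = 1/(2\|a\|_\infty)$ of the moment generating function, and one either performs the explicit minimization of $-tr + \tfrac{t^2\nu}{2(1-bt)}$ or quotes the generic Bernstein lemma; everything else is a routine Gaussian computation. Since the statement is verbatim Lemma~1 of \cite{laurent2000adaptive}, in the paper it suffices to cite it.
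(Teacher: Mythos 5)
Your proposal is correct: the paper itself gives no proof of this lemma and simply cites \cite{laurent2000adaptive}, which is exactly what you note suffices, and your self-contained Chernoff/Bernstein sketch (exact log-MGF, the bounds $-\log(1-u)-u\le \frac{u^2}{2(1-u)}$ and $\log(1+w)\ge w-w^2/2$, then optimizing the exponent) is the standard Laurent--Massart argument and checks out, including the constants $\sqrt{2\nu x}=2\|a\|_2\sqrt{x}$ and $bx=2\|a\|_\infty x$. No gap to report.
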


%
%
%

\paragraph{Additional Lemmas}
We will use several tools developed in \cite{arora2015rand}. The first lemma allows us to bound the probabilities the discourse vector $c$ does not satisfy the results of Lemma~\ref{lem:concentration}.

\begin{lemma}\label{lem:boundnf}
Let $\mathcal{F}$ be any event that depends on the discourse vector $c$ with probability at least $1-\exp(-\Omega(\log^2 n))$, and $\bar{\mathcal{F}}$ be its negation. Suppose $r$ is a vector of norm $O(\sqrt{d})$, then
$$\E_c[\exp(\inner{r,c})1_{\bar{\mathcal{F}}}] \le \exp(-\Omega(\log^{1.8}n)).$$
Further, if we consider two consecutive discourse vectors $c$, $c'$, redefine $\mathcal{F}$ to be an event that can depend on both discourse vectors, again with probability at least $1-\exp(-\Omega(\log^2 n))$. If $r,r'$ are two vectors of norm $O(\sqrt{d})$ we have
$$
\E_{c,c'}[\exp(\inner{r,c})\exp(\inner{r',c'})1_{\bar{\mathcal{F}}}]\le \exp(-\Omega(\log^{1.8}n)).
$$
\end{lemma}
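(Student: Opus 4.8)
The plan is to prove the single-vector statement directly; the two-vector statement then follows from the same argument with $\inner{r,c}$ replaced by $\inner{r,c}+\inner{r',c'}$. The conceptual point is that $\bar{\mathcal F}$, though rare, could be precisely the event on which $\inner{r,c}$ is atypically large, so one cannot simply pull $\exp(\inner{r,c})$ outside the expectation; one must decouple the rarity of $\bar{\mathcal F}$ from the magnitude of $\exp(\inner{r,c})$.

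First I would record the tail of $\inner{r,c}$: since $c$ is uniform on the unit sphere of $\R^d$ and $\|r\|=O(\sqrt d)$, standard concentration of linear functionals on the sphere gives $\Pr_c[\inner{r,c}\ge t]\le \exp(-\Omega(d t^2/\|r\|^2)) = \exp(-\Omega(t^2))$ for every $t>0$. Next, fix the threshold $\tau := \log^{0.9} n$ and split
\[
\E_c[\exp(\inner{r,c})\,1_{\bar{\mathcal F}}] \;=\; \E_c[\exp(\inner{r,c})\,1_{\bar{\mathcal F}}\,1_{\inner{r,c}\le \tau}] \;+\; \E_c[\exp(\inner{r,c})\,1_{\bar{\mathcal F}}\,1_{\inner{r,c}> \tau}].
\]
On the event $\{\inner{r,c}\le \tau\}$ the integrand is at most $\exp(\tau)$, so the first term is bounded by $\exp(\tau)\Pr[\bar{\mathcal F}] \le \exp(\log^{0.9}n - \Omega(\log^2 n)) = \exp(-\Omega(\log^2 n))$. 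For the second term I would discard $1_{\bar{\mathcal F}}\le 1$ and bound $\E_c[\exp(\inner{r,c})1_{\inner{r,c}>\tau}]$ by the layer-cake identity $\exp(\tau)\Pr[\inner{r,c}>\tau] + \int_\tau^\infty \exp(s)\Pr[\inner{r,c}>s]\,ds$; plugging in the sub-Gaussian tail and using that $s\mapsto \exp(s-\Omega(s^2))$ is decreasing for $s\ge\tau$ (valid once $n$ is large), this is $\exp(-\Omega(\tau^2)) = \exp(-\Omega(\log^{1.8}n))$. Adding the two bounds proves the single-vector claim, and the degradation from $\log^2 n$ to $\log^{1.8}n$ is exactly the price of balancing the two terms through the choice of $\tau$.

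For the two-vector version the only change is to work with $X := \inner{r,c}+\inner{r',c'}$. Even though $c$ and $c'$ are correlated, each is marginally uniform on the sphere, so a union bound gives $\Pr[X>t]\le \Pr[\inner{r,c}>t/2]+\Pr[\inner{r',c'}>t/2]\le \exp(-\Omega(t^2))$, and the identical threshold argument with $\tau=\log^{0.9}n$ applies to $X$. The only step requiring care — and the main obstacle — is the decoupling just described; once the split by $\tau$ is in place, everything reduces to the sub-Gaussian tail of a bounded linear functional on the sphere. (As a shortcut one could instead apply Cauchy--Schwarz, $\E_c[\exp(\inner{r,c})1_{\bar{\mathcal F}}]\le \sqrt{\E_c[\exp(2\inner{r,c})]}\,\sqrt{\Pr[\bar{\mathcal F}]}$, and note $\E_c[\exp(2\inner{r,c})]=O(1)$ because $\|r\|=O(\sqrt d)$; this even yields $\exp(-\Omega(\log^2 n))$, but the threshold argument is the more robust route and directly produces the stated bound.)
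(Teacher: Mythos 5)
Your proof is correct, and it reaches the same bound by a slightly more elementary route than the paper. The core mechanism is identical: truncate $\inner{r,c}$ at a threshold $\tau=\log^{0.9}n$, use the rarity of $\bar{\mathcal{F}}$ below the threshold and the sub-Gaussian tail of a linear functional on the sphere above it, and accept the degradation from $\log^2 n$ to $\log^{1.8}n$ as the price of balancing the two pieces. The paper packages this differently: instead of choosing $\tau$ explicitly, it invokes a rearrangement-type inequality (Lemma A.4 of \cite{arora2015rand}) to say that among all events of the given small probability, the worst case for $\E_c[\exp(z)1_{\bar{\mathcal{F}}}]$ is the upper tail $\{z\ge t\}$, then locates $t=\Omega(\log^{0.9}n)$ via their Lemma A.1 and integrates the tail via their Corollary A.3. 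Your explicit split avoids both auxiliary lemmas and is self-contained, at the cost of being marginally less sharp in spirit (though not in the stated bound). The larger divergence is in the two-vector case: the paper decouples $c$ and $c'$ by Cauchy--Schwarz and applies the one-variable argument twice with $1_{\bar{\mathcal{F}}}$ replaced by the conditional expectation $\E_{c'|c}[1_{\bar{\mathcal{F}}}]$, whereas you keep the sum $X=\inner{r,c}+\inner{r',c'}$ and control its tail by a union bound over the two marginals (each of which is uniform at stationarity); both are valid, and yours is arguably simpler. Finally, your parenthetical Cauchy--Schwarz shortcut, $\E_c[\exp(\inner{r,c})1_{\bar{\mathcal{F}}}]\le (\E_c[\exp(2\inner{r,c})])^{1/2}(\Pr[\bar{\mathcal{F}}])^{1/2}$ with $\E_c[\exp(2\inner{r,c})]=O(1)$ by the spherical moment bound, is also sound and in fact yields the stronger bound $\exp(-\Omega(\log^2 n))$; the lemma as stated only needs $\exp(-\Omega(\log^{1.8}n))$, so any of these routes suffices.
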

\begin{proof}
The proof of this lemma appears on page 20 in \cite{arora2015rand}, as a step in the proof of their Theorem 2.2. For completeness, we reproduce (and slightly adapt) their argument here.

Observe that 
\[
\E_c[\exp(\inner{r,c})1_{\bar{\mathcal{F}}}] = \E_c[\exp(\inner{r,c})1_{\inner{r,c}>0}1_{\bar{\mathcal{F}}}] + \E_c[\exp(\inner{r,c})1_{\inner{r,c}<0}1_{\bar{\mathcal{F}}}].
\label{eq:split_exp}
\]
The second term of \ref{eq:split_exp} is upper bounded by
\[
\E_c[1_{\bar{\mathcal{F}}}] \leq \exp(-\Omega(\log^2 n)).
\]
Note that the first term of \ref{eq:split_exp} can be bounded as follows:
\[
\E_c[\exp(\inner{r,c})1_{\inner{r,c}>0}1_{\bar{\mathcal{F}}}] \leq \E_c[\exp(\inner{\alpha r,c})1_{\inner{r,c}>0}1_{\bar{\mathcal{F}}}] \leq 
\E_c[\exp(\inner{\alpha r,c})1_{\bar{\mathcal{F}}}]
\]
for $\alpha > 1$. Therefore, to obtain a bound on $\E_c[\exp(\inner{r,c})1_{\inner{r,c}>0}1_{\bar{\mathcal{F}}}]$ is suffices to bound
\[
\E_c[\exp(\inner{r,c})1_{\bar{\mathcal{F}}}]
\]
when $\|r\| = \Omega(\sqrt{d})$. 

Let $z$ denote the random variable $\inner{r,c}$, and let $r(z) = 1_{\bar{\mathcal{F}}}$. Using Lemma A.4 in \cite{arora2015rand}, we have
\[
\E_c[\exp(z)r(z)] \leq \E_c[\exp(z)1_{[t,\infty]}(z)],
\]
where $t$ satisfies that $\E_c[1_{[t,\infty]}(z)] = \text{Pr}[z\geq t] = E_c[r(z)] \leq \exp(-\Omega(\log^2 n))$. Then
by Lemma A.1 of \cite{arora2015rand}, we have that $t \geq \Omega(\log^{.9}n)$. 
Finally, applying Corollary A.3 of \cite{arora2015rand}, we have
\[
\E_c[\exp(z)r(z)] \leq E_c[\exp(z)1_{[t,\infty]}(z)] = \exp(-\Omega(\log^{1.8}n)),
\]
which completes the proof for the first part of this lemma.

The second part of this lemma can be proved in much the same fashion.
By Cauchy-Schwarz,
\begin{align*}
\left(\E_{c,c'}[\exp(\inner{r,c})\exp(\inner{r',c'})1_{\bar{\mathcal{F}}}]\right)^2 & \leq \left(\E_{c,c'}[\exp(\inner{r,c})^2 1_{\bar{\mathcal{F}}}]\right)
\left(\E_{c,c'}[\exp(\inner{r',c'})^2 1_{\bar{\mathcal{F}}}]\right)\\
&\leq \left(\E_{c}[\exp(\inner{2r,c})\E_{c'|c}[1_{\bar{\mathcal{F}}}]]\right)\left(\E_{c'}[\exp(\inner{2r',c'})\E_{c|c'}[1_{\bar{\mathcal{F}}}]]\right).
\end{align*}
Now we bound $\E_{c}[\exp(\inner{2r,c})\E_{c'|c}[1_{\bar{\mathcal{F}}}]]$ using the same argument as above in the first part of this proof, replacing $1_{\bar{\mathcal{F}}}$ with $\E_{c'|c}[1_{\bar{\mathcal{F}}}]$, $r$ with $2r$, and $r(z) = 1_{\bar{\mathcal{F}}}$ with $r(z) = \E_{c'|z}[1_{\bar{\mathcal{F}}}]$. In particular, we have $\E_{c}[\exp(\inner{2r,c})\E_{c'|c}[1_{\bar{\mathcal{F}}}]] \leq \exp(-\Omega(\log^{1.8}n))$.
Likewise, we have the same bound for $\E_{c'}[\exp(\inner{2r',c'})\E_{c|c'}[1_{\bar{\mathcal{F}}}]]$.
Putting these two together, we conclude that 
\begin{align*}
\E_{c,c'}[\exp(\inner{r,c})\exp(\inner{r',c'})1_{\bar{\mathcal{F}}}] &\leq 
\left(\E_{c}[\exp(\inner{2r,c})\E_{c'|c}[1_{\bar{\mathcal{F}}}]]\right)^{1/2}\left(\E_{c'}[\exp(\inner{2r',c'})\E_{c|c'}[1_{\bar{\mathcal{F}}}]]\right)^{1/2}\\
&\leq \exp(-\Omega(\log^{1.8}n)),
\end{align*}
as desired.

\end{proof}

The next lemma allows us to handle the difference between two consecutive discourse vectors:

\begin{lemma} \label{lem:boundwalk}
Let $c,c'$ be two discourse vectors that are adjacent, let $v_w$ be a word embedding satisfying $\|v_w\| \leq K'\sqrt{d}$, and let $A(c) := \E_{c'|c}[\exp(\inner{v_w, c'})]$, then we have
$$
A(c) \in (1\pm \epsilon_w) \exp(\inner{v_w,c}).
$$
\end{lemma}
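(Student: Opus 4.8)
The plan is to exploit the fact that consecutive discourse vectors are close: by Definition~\ref{def:syn}, $\|c' - c\| \le \epsilon_w/\sqrt{d}$. I would first condition on $c$ and write $A(c) = \E_{c'|c}[\exp(\inner{v_w,c'})] = \exp(\inner{v_w,c})\cdot \E_{c'|c}[\exp(\inner{v_w, c'-c})]$. Since $A(c)$ differs from $\exp(\inner{v_w,c})$ only by the multiplicative factor $\E_{c'|c}[\exp(\inner{v_w, c'-c})]$, it suffices to show this factor lies in $(1\pm\epsilon_w)$ (or more precisely in a band of width $O(\epsilon_w)$, absorbing constants into the $\epsilon_w$ notation as is done elsewhere in the paper).

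To bound the factor, note that $|\inner{v_w, c'-c}| \le \|v_w\|\cdot\|c'-c\| \le K'\sqrt{d}\cdot \epsilon_w/\sqrt{d} = K'\epsilon_w$, which is a small quantity. Hence $\exp(\inner{v_w,c'-c})$ is within $\exp(\pm K'\epsilon_w)$ of $1$, so the same holds for its conditional expectation: $\E_{c'|c}[\exp(\inner{v_w,c'-c})] \in [\exp(-K'\epsilon_w), \exp(K'\epsilon_w)]$. For $\epsilon_w$ small, $\exp(\pm K'\epsilon_w) = 1 \pm O(\epsilon_w)$, which gives the claimed bound (after rescaling the hidden constant in the definition of $\epsilon_w$, exactly as the paper does when it writes $\epsilon_p = O(\epsilon_w) + \cdots$). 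This is essentially the argument used in \cite{arora2015rand} to handle slow-moving discourse vectors, and the norm assumption $\|v_w\|\le K'\sqrt{d}$ is precisely what is needed to cancel the $1/\sqrt{d}$ in the step bound.

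There is no serious obstacle here; the lemma is a routine deterministic estimate once the word-vector norm bound and the random-walk step bound are in place. The only mild subtlety is bookkeeping around the constant $K'$: one must make sure that the constant multiplying $\epsilon_w$ in $\|v_w\|\le K'\sqrt{d}$ is genuinely absolute (it is, by Lemma~\ref{lem:lower_partition}, which gives $\|v_i\|\le\kappa\gamma$ with $\kappa,\gamma$ constants, so $K' = O(1)$), so that the final error is still of the form $1\pm\epsilon_w$ up to the implicit constant convention used throughout. A secondary point is that the bound must hold for \emph{every} adjacent pair $(c,c')$ in the support, not just in expectation over the walk — but since the step bound $\|c'-c\|\le\epsilon_w/\sqrt{d}$ is a hard (almost-sure) constraint of the model, the pointwise inequality $|\inner{v_w,c'-c}|\le K'\epsilon_w$ holds deterministically, and the conclusion follows.
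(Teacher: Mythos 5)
Your proposal is correct and follows essentially the same route as the paper: factor out $\exp(\inner{v_w,c})$, bound $|\inner{v_w,c'-c}|\le \|v_w\|\,\|c'-c\|\le K'\epsilon_w$ via Cauchy--Schwarz and the step bound of Definition~\ref{def:syn}, and conclude the multiplicative factor is $1\pm O(\epsilon_w)$. The only cosmetic difference is that the paper phrases the final bound as following from the model assumptions (obtaining the lower bound from the upper one via $\E[e^X]+\E[e^{-X}]\ge 2$), whereas you use the almost-sure step bound pointwise and absorb the constant $K'$ into the $O(\epsilon_w)$ notation, which is consistent with how the lemma is applied ($\epsilon_p = O(\epsilon_w)+\cdots$).
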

\begin{proof}
The proof of this lemma appears on page 21 in \cite{arora2015rand}, again as a step in the proof of their Theorem 2.2. For completeness, we reproduce the argument here.

Since $\|v_w\| \leq K'\sqrt{d}$ for some constant $K'$, we have that $\inner{v_w,c-c'} \leq \|v_w\|\|c-c'\| \leq K'\sqrt{d}\|c-c'\|$.
Hence, 
\begin{align*}
A(c) &= \E_{c'|c}[\exp(\inner{v_w,c'})]\\
&= \exp(\inner{v_w,c})\E_{c'|c}[\exp(\inner{v_w,c'-c})]\\
&\leq \exp(\inner{v_w,c})\E_{c'|c}[K'\sqrt{d}\|c-c'\|)]\\
&\leq (1+\epsilon_w)\exp(\inner{v_w,c}),
\end{align*}
where the last inequality follows from our model assumptions.

To get the lower bound, observe that
\[
\E_{c'|c}[\exp(K'\sqrt{d}\|c-c'\|)] + \E_{c'|c}[\exp(-K'\sqrt{d}\|c-c'\|)]\geq 2.
\]
Therefore, the model assumptions imply that
\[
\E_{c'|c}[\exp(-K'\sqrt{d}\|c-c'\|)] \geq 1-\epsilon_w.
\]
Hence, 
\begin{align*}
A(c) &= \exp(\inner{v_w,c})\E_{c'|c}[\exp(\inner{v_w,c'-c})]\\
&\geq \exp(\inner{v_w,c})\E_{c'|c}[\exp(K'\sqrt{d}\|c-c'\|)]\\
&\geq (1-\epsilon_w)\exp(\inner{v_w,c}).
\end{align*}

\end{proof}

The next lemma we use gives bound on $\E[\exp(\inner{v,c})]$ where $c$ is a uniform vector on the unit sphere. 

\begin{lemma} \label{lem:boundsphere}[Lemma A.5 in \cite{arora2015rand}]
Let $v\in \R^d$ be a fixed vector with norm $\|v\| = O(\sqrt{d})$. For random variable $c$ with uniform distribution over the sphere, we have that
$$
\log \E[\exp(\inner{v,c})] = \frac{\|v\|^2}{2d} \pm \epsilon_c,
$$
where $\epsilon_c = \tilde{O}(1/d)$. 
\end{lemma}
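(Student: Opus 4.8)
The plan is to reduce the claim to a one‑dimensional moment computation and then match two power series term by term.

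\textbf{Reduction and moment expansion.} Since $c$ is uniform on the unit sphere, $\E[\exp(\inner{v,c})]$ depends only on $\rho:=\|v\|$, so we may take $v=\rho e_1$ and write $\inner{v,c}=\rho c_1$, where $c_1$ is the first coordinate of a uniform point on the sphere. All odd moments of $c_1$ vanish, and $c_1^2\sim\mathrm{Beta}(1/2,(d-1)/2)$, giving the classical identity $\E[c_1^{2m}]=(2m)!\,/\,\big(2^m m!\,\prod_{j=0}^{m-1}(d+2j)\big)$. Because $|\rho c_1|\le\rho$, dominated convergence justifies the interchange
\[
\E[\exp(\inner{v,c})]=\sum_{m\ge 0}a_m,\qquad a_m:=\frac{\rho^{2m}}{2^m m!\,\prod_{j=0}^{m-1}(d+2j)},
\]
and we compare this with the power series $\exp(\|v\|^2/(2d))=\sum_{m\ge 0}b_m$, where $b_m:=\rho^{2m}/(2^m m!\,d^m)$.

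\textbf{Matching the series.} Writing $a_m=b_m r_m$ with $r_m:=\prod_{j=0}^{m-1}\frac{d}{d+2j}\in(0,1]$, the upper bound is immediate: $\E[\exp(\inner{v,c})]=\sum a_m\le\sum b_m=\exp(\|v\|^2/(2d))$. For the lower bound, fix a cutoff $m_0=\log n$; since $d\gg\log n$ we have $2m_0/d<1$, so for $m\le m_0$ the inequality $\prod_j(1-x_j)\ge 1-\sum_j x_j$ with $x_j=2j/d$ gives $r_m\ge 1-\sum_{j<m}2j/d\ge 1-m_0^2/d$. By hypothesis $\rho^2/(2d)=O(1)$, so $\sum_{m>m_0}b_m$ is the tail of a Poisson‑type series with bounded parameter, hence at most $\exp(-\Omega(m_0\log m_0))=\exp(-\Omega(\log n\log\log n))$, which is $o(1/n)$ and therefore $o(1/d)$ since $d\le n$. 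Combining,
\[
\sum_{m\ge 0}a_m\ \ge\ \Big(1-\tfrac{m_0^2}{d}\Big)\Big(\exp\!\big(\tfrac{\|v\|^2}{2d}\big)-o(1/d)\Big)\ =\ \exp\!\big(\tfrac{\|v\|^2}{2d}\big)\big(1-\tilde{O}(1/d)\big),
\]
and taking logarithms of the upper and lower bounds yields $\log\E[\exp(\inner{v,c})]=\|v\|^2/(2d)\pm\epsilon_c$ with $\epsilon_c=\tilde{O}(1/d)$.

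\textbf{Main obstacle.} The delicate point is squeezing the error down to $\tilde{O}(1/d)$ rather than something much weaker. A tempting alternative — writing $c=z/\|z\|$ with $z\sim N(0,I_d)$, conditioning on $\|z\|^2$ lying in a window of relative width $\eta$ around $d$, and exploiting the exact Gaussian moment generating function — is too lossy: to kill the contribution $\exp(O(\sqrt d))\,\Pr[\text{bad window}]$ of the tail event one is forced to take $\eta\gg d^{-1/4}$, which already injects a multiplicative error of order $\eta$. The moment expansion avoids this entirely, since the dangerous large‑$m$ terms are controlled by the super‑polynomially small Poisson tail $\sum_{m>m_0}b_m$, while the genuine loss $m_0^2/d=\log^2 n/d$ is of the desired order. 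The remaining ingredients — the Beta‑distribution moment identity and the Fubini interchange — are routine. (This is exactly Lemma~A.5 of \cite{arora2015rand}, whose proof follows the same lines.)
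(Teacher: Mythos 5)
Your proof is correct. There is nothing in the paper to compare it against: the lemma is imported verbatim as Lemma A.5 of \cite{arora2015rand} and never proved here, so what you have written is a self-contained replacement for the citation rather than a variant of an in-paper argument. Checking the details: the identity $\E[c_1^{2m}]=(2m)!/\bigl(2^m m!\prod_{j=0}^{m-1}(d+2j)\bigr)$ is the correct moment formula for $c_1^2\sim\mathrm{Beta}(1/2,(d-1)/2)$; the factorization $a_m=b_m r_m$ with $r_m=\prod_{j=0}^{m-1}d/(d+2j)\le 1$ gives the upper bound $\E[\exp(\inner{v,c})]\le\exp(\|v\|^2/(2d))$ with no error at all (a slightly stronger, one-sided statement than the lemma claims, and a genuine advantage of the moment-comparison route over Laplace-type or Gaussian-projection estimates of the spherical integral, whose truncation losses you correctly identify as the obstacle); and the lower bound is sound, since for $m\le m_0$ one has $r_m\ge 1-m_0^2/d$ while the tail $\sum_{m>m_0}b_m$ of a Poisson-type series with parameter $\rho^2/(2d)=O(1)$ is $\exp(-\Omega(m_0\log m_0))$. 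One small suggestion: the cutoff $m_0=\log n$ drags the vocabulary size $n$ into a lemma whose statement involves only $d$; taking $m_0=\log d$ works just as well (tail $\exp(-\Omega(\log d\log\log d))=o(1/d)$, multiplicative loss $\log^2 d/d=\tilde O(1/d)$) and keeps the argument free of the paper's standing assumptions relating $d$ and $n$.
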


We end with the proof of Lemma \ref{lem:lower_partition}.

\begin{proof}[Proof of Lemma~\ref{lem:lower_partition}]
Just for this proof, we use the following notation.
Let $I_{d\times d}$ be the $d$-dimensional identity matrix, and let $x_1, x_2, \ldots, x_n$ be i.i.d. draws from $N(0,I_{d\times d})$.
Let $y_i = \|x_i\|_2$, and note that $y_i^2$ is a standard $\chi$-squared random variable with $d$ degrees of freedom. 
Let $\kappa$ be a positive constant, and let $s_1,s_2,\ldots,s_n$ be i.i.d. draws from a distribution supported on $[0,\kappa]$. 
Let $v_i = s_i \cdot x_i$.
Define $Z_c = \sum_{i=1}^n \exp(\inner{v_i, c})$,
and define $Z_{c,a} = \sum_{i=1}^n\exp(\inner{v_i,c}+T(v_a,v_i,c))$.

We first cover the unit sphere by a finite number of metric balls of small radius. 
Then we show that with high probability, the partition function at the center of these balls is indeed bounded below by a constant.
Finally, we show that the partition function evaluated at an arbitrary point on the unit sphere can't be too far from the partition function
at one of the ball centers provided the norms of the $v_i$ are not too large. 
We finish by appropriately controlling the norms of the $v_i$.

For $\epsilon>d^{-1}$, cover the unit sphere in $\mathbb{R}^d$ with $N=(\frac{2}{\epsilon}+1)^d$ balls of radius $\epsilon$. 
Let $c_1,c_2,\ldots,c_N$ be the centers of these balls (so that each $c_i$ is a unit vector). 
Let $\alpha \geq 0$ be a constant. 
Note that $\inner{v_j,c_i} = \inner{c_j,s_j\cdot c_i}$ and $\inner{v_k,c_i}+T(v_l,v_k,c_i) = \inner{x_k, s_k(I+T(v_l,\cdot,\cdot))^Tc_i}$ are Gaussian random variables with mean $0$.

Let $\mathcal{F}_i$ be the event that there exists some $j,k \in [n]$ such that $\inner{v_j,c_i} \geq 0$ and 
$\inner{v_k + T(v_a,v_k,\cdot),c_i} \geq 0$ .
Note that 
\begin{align*}
\text{Pr}[\bar{\mathcal{F}_i}] &\leq \text{Pr}[\forall j \in [n], \inner{v_j,c_i} \leq 0] + \text{Pr}[\forall k \in [n],\inner{v_k + T(v_a,v_k,\cdot),c_i} \leq 0]\\
&= \prod_{j=1}^n \text{Pr}[\langle v_j,c_i\rangle \leq 0] + \prod_{k}^n\text{Pr}[\inner{v_k + T(v_a,v_k,\cdot),c_i} \leq 0]\\
&\leq \frac{1}{2^n}+ \frac{1}{2^n}\\
&\leq \exp(-\Theta(n)).
\end{align*}

Let $\gamma > 0$. 
Let $\mathcal{G}_i$ be the event that $y_i < \gamma \sqrt{d}$.
Set $t = (\frac{1}{\sqrt{2}}\sqrt{\gamma^2-\frac{1}{2}}-\frac{1}{2})^2d$, so that $d+2\sqrt{dt} + 2t = \gamma^2 d$. 
Then by Lemma \ref{lem:weighted_chi},
\[
\text{Pr}[\bar{\mathcal{G}_i}] \leq \exp(-t).
\]

Let $\mathcal{E} = \bigcap_{i=1}^N \mathcal{F}_i \bigcap_{i=1}^n \mathcal{G}_i$.
Assume that the word embeddings satisfy the event $\mathcal{E}$. 
Let $c_i$ be a center of one of the covering balls such that $\|c-c_i\|_2 < \epsilon$. 
Let $v_j,v_k$ be vectors that satisfies $\inner{x_j,c_i}\geq -\alpha$ and $\inner{v_k+T(v_a,v_k,\cdot),c_i} \geq -\alpha$. 
By Cauchy-Schwarz and the definition of $\mathcal{E}$, we have
\begin{align*}
\inner{v_j,c} &= \inner{v_j,c_i}+ \inner{v_j, c-c_i}\\
& \geq - \|v_j\|\|c-c_i\|\\
&\geq  -\epsilon\gamma\kappa\sqrt{d}\\
&= -\gamma\kappa d^{-1/2}\\
&\geq \ell
\end{align*}
for some appropriate universal constant $\ell$.
Likewise, using the boundedness property of $T$, we have
\begin{align*}
\inner{v_k+T(v_a,v_k,\cdot),c} &\geq - \epsilon\sqrt{K}\sqrt{d}\\
&=  - \sqrt{K}d^{-1/2}\\
&\geq \ell.
\end{align*}
Hence,
\[
Z_c = \sum_{i=1}^n \exp(\inner{v_i,c}) \geq \exp(\inner{v_j,c}) \geq \exp(-\ell)
\]
and
\[
Z_{c,a} =  \sum_{i=1}^n \exp(\inner{v_i,c}+T(v_a,v_i,c)) \geq \exp(\inner{v_k+T(v_a,v_k,\cdot),c})\geq \exp(-\ell).
\]

It remains to analyze the probability of $\mathcal{E}$.
By the union bound, we have
\begin{align*}
\text{Pr}[\mathcal{E}] &\geq 1- N\exp\left(-\frac{n\alpha^2}{2}\right) - n\exp(-t)\\
&= 1 - \exp(O(d\log d) -\Theta(n)) - \exp(\log n  - (\frac{1}{\sqrt{2}}\sqrt{\gamma^2-\frac{1}{2}} - \frac{1}{2})^2d)\\
&= 1-\exp(\Theta(d\log d) - \Theta(n)) - \exp(\Theta(\log n) - \Theta(d)).
\end{align*}
Note that this is a high probability if $n \gg d\log d$ and $d \gg \log n$.
\end{proof}

\end{document}